\DeclarePairedDelimiter\floor{\lfloor}{\rfloor}
\newcommand\blfootnote[1]{%
  \begingroup
  \renewcommand\thefootnote{}\footnote{#1}%
  \addtocounter{footnote}{-1}%
  \endgroup
}
\begin{document}

%

%

\twocolumn[

\aistatstitle{Ordered SGD: A New Stochastic Optimization Framework for Empirical Risk Minimization}

\aistatsauthor{Kenji Kawaguchi\textsuperscript{*} \And Haihao Lu\textsuperscript{*}}

\aistatsaddress{MIT  \And Google Research} 

]

\begin{abstract}
We propose a new stochastic optimization framework for empirical risk minimization problems such as those that arise in machine learning. The traditional approaches, such as (mini-batch) stochastic gradient descent (SGD), utilize an unbiased gradient estimator of the empirical average loss. In contrast, we develop a computationally efficient method to construct a gradient estimator that is purposely biased toward those observations with higher current losses. On the theory side,  we show that the proposed method minimizes a new ordered modification of the empirical average loss, and is guaranteed to converge at a sublinear rate to a global optimum for convex loss and to a critical point for weakly convex (non-convex) loss.  Furthermore, we prove a new generalization bound for the proposed algorithm. On the empirical side, the numerical experiments show that our proposed method consistently improves the test errors compared with the standard mini-batch SGD in various models including SVM, logistic regression, and deep learning problems.  
\end{abstract}

\section{Introduction} \label{sec:intro}
Stochastic Gradient Descent (SGD), as the workhorse training algorithm for most machine learning applications including deep learning, has been extensively studied in recent years (e.g., see a recent review  by \citealt{bottou2018optimization}). At every step, SGD draws one training sample uniformly at random from the training dataset, and then uses the (sub-)gradient of the loss over the selected sample to update the model parameters. The most popular version of SGD in practice is perhaps the mini-batch  SGD \citep{bottou2018optimization,dean2012large}, which is widely implemented in the state-of-the-art deep learning frameworks, such as TensorFlow \citep{abadi2016tensorflow}, PyTorch \citep{paszke2017automatic} and CNTK \citep{seide2016cntk}. Instead of choosing one sample per iteration,  mini-batch SGD randomly selects a mini-batch of the samples, and uses the (sub-)gradient of the average loss over the  selected samples to update the model parameters. 

Both SGD and mini-batch SGD utilize uniform sampling during the entire learning process, so that the stochastic gradient is always an unbiased gradient estimator of the empirical average loss over all samples. On the other hand, it appears to practitioners that not all samples are equally important, and indeed most of them could be ignored after a few epochs of training without affecting the final model
\citep{katharopoulos2018not}. For example, intuitively, the samples near the final decision boundary should be more important to build the model than those far away from the boundary for classification problems. In particular, as we will illustrate later in Figure \ref{fig:2d_illustration}, there are cases when those far-away samples may corrupt the model by using average loss. In order to further explore such structures, we propose an efficient sampling scheme on top of the mini-batch SGD. We call the resulting algorithm   \textit{ordered SGD}, which is used to learn a different type of models with the goal to improve the testing performance. \blfootnote{*equal contribution}

The above motivation of ordered SGD is  related to that of importance sampling SGD, which has been extensively studied recently in order to improve the convergence speed of SGD \citep{needell2014stochastic,zhao2015stochastic,alain2015variance,loshchilov2015online,gopal2016adaptive,katharopoulos2018not}. However, our goals, algorithms and theoretical results are fundamentally different from those in the previous studies on importance sampling SGD. Indeed, all aforementioned studies are aimed to  accelerate the minimization process for the  empirical average loss, whereas our proposed method turns out to minimize a new objective function by purposely constructing a biased gradient. 

Our main contributions can be summarized as follows: i) we propose a computationally efficient and easily implementable algorithm, ordered SGD, with principled  motivations (Section \ref{sec:algo}), ii) we show that  ordered SGD minimizes an  ordered empirical risk  with sub-linear rate for convex and weakly convex (non-convex) loss functions  (Section \ref{sec:optimization}), iii) we prove a generalization bound for ordered SGD (Section \ref{sec:generalization}), and iv) our numerical experiments show ordered SGD consistently improved mini-batch SGD in test errors (Section \ref{sec:nume}).

\section{Empirical Risk Minimization}

Empirical risk minimization is one of the main tools to build a model in machine learning. Let  $\Dcal=((x_i,y_i))_{i=1}^n$ be a training dataset of  $n$ samples where $x_i \in\Xcal \subseteq  \RR^{d_x}$ is the input vector and $y_i \in\Ycal \subseteq  \RR^{d_y}$ is the target output vector for the $i$-th sample. The goal of empirical risk minimization is to find a prediction function $f(\hspace{1pt}\cdot\hspace{2pt};\theta): \RR^{d_x}  \rightarrow \RR^{d_y}$, by minimizing 
\begin{equation} \label{eq:poi}
L(\theta ) := \frac{1}{n}\sum_{i=1}^n  L_i(\theta)+R(\theta),
\end{equation}
where $\theta\in\RR^{d_\theta}$ is the parameter vector of the prediction model, $L_i(\theta) := \ell(f(x_{i};\theta),y_{i})$ with the function $\ell: \RR^{d_y} \times\Ycal \rightarrow \RR_{\ge 0}$ is the loss of the $i$-th sample, and $R(\theta)\ge0$ is a regularizer. For example, in logistic regression, $f(x;\theta)=\theta^T x$ is a linear function of the input vector $x$, and $\ell(a,y)=\log(1+\exp(-y a))$ is the logistic loss function with $y\in\{-1,1\}$. For a  neural network, $f(x;\theta)$ represents the pre-activation output of the last layer.


\section{Algorithm} \label{sec:algo}

In this section, we  introduce  ordered SGD and  provide an intuitive explanation of the advantage of ordered SGD by looking at 2-dimension toy  examples with linear classifiers and  small artificial neural networks (ANNs). Let us first introduce a new notation $\qargmax$ as an extension to the standard notation $\argmax$:
\begin{definition}
Given a set of $n$ real numbers $(a_{1},a_2,\dots,a_n)$, an index subset $S\subseteq \{1,2,\ldots,n\}$, and a positive integer number $q\le |S|$, we define $\qargmax_{j\in S} \hspace{2pt} a_j$ such that $Q \in \qargmax_{j\in S} \hspace{2pt} a_j$ is a set of $q$ indexes of the $q$ largest values of $(a_j)_{j \in S}$; i.e.,  $\qargmax_{j\in S} \hspace{2pt} a_j = \argmax_{Q \subseteq S, |Q|=q} \sum_{i\in Q} a_i$.
\end{definition}

\begin{algorithm}[t!]
\caption{Ordered Stochastic Gradient Descent (ordered SGD)} \label{al:qSGD} 
$ \ $
\begin{algorithmic}[1] 
\STATE {\bf Inputs:}  an initial vector  $\theta^0$ and a learning rate sequence $(\eta_k)_{k}$  
\FOR {$t = 1,2,\ldots$} 
  \STATE Randomly choose a  mini-batch of samples: $S\subseteq \{1, 2, \ldots, n\}$ such that $|S|=s$.
  \STATE Find a set $Q$ of top-$q$ samples in $S$ in term of loss values: $Q \in \qargmax_{i\in S} L_i(\theta^t)$.
  \STATE Compute a subgradient $\tg^{t}$ of the top-$q$ samples $L_{Q}(\theta^t)$:  $\tg^{t} \in \partial L_{Q}(\theta^t)$ where $ L_{Q}(\theta^t)= \frac{1}{q} \sum_{i\in Q}  L_i(\theta^t) +  R(\theta^{t})$ and $\partial L_Q$ is the set of sub-gradient\footnotemark of function $L_Q$.
  \STATE Update parameters $\theta$: $\theta^{t+1}=\theta^t - \eta_t  \tg^{t}$
\ENDFOR   
\end{algorithmic}
\end{algorithm}

Algorithm \ref{al:qSGD} describes the pseudocode of our proposed algorithm, ordered SGD. The procedures of ordered SGD follow those of  mini-batch SGD except the following modification:  after drawing a mini-batch of size $s$, ordered SGD updates the parameter vector $\theta$ based on the (sub-)gradient of the average loss over the top-$q$ samples in the mini-batch in terms of individual loss values (lines 4 and 5 of Algorithm \ref{al:qSGD}). This modification  is   used to purposely build and utilize a biased gradient estimator with more weights on the samples having larger losses. As it can be seen in Algorithm \ref{al:qSGD}, ordered SGD is  easily implementable, requiring to change only a single line or few lines on top of a mini-batch SGD implementation. 

\footnotetext{The sub-gradient for (non-convex) $\rho$-weakly convex function $L_Q$ at $\theta^t$ is defined as $\{g|L_Q(\theta)\ge L_Q(\theta^t) + \langle g,\theta-\theta^t\rangle - \frac{\rho}{2}\|\theta-\theta^t\|^2, \forall{\theta}\}$ \citep{rockafellar2009variational}.}

\begin{figure*}[t!]
\centering
\begin{subfigure}[b]{0.245\textwidth}\centering
  \includegraphics[width=\textwidth,height=0.6\textwidth]{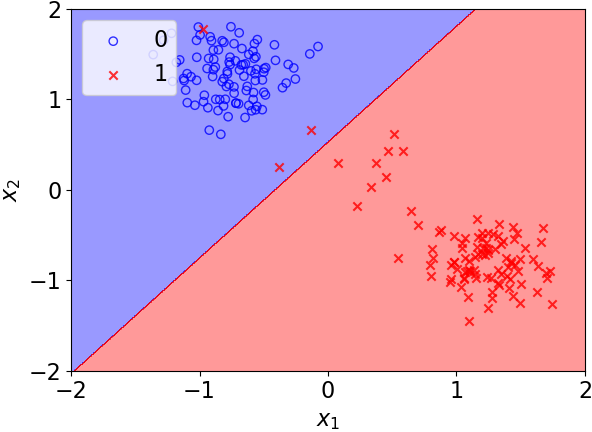}
\end{subfigure} 
\begin{subfigure}[b]{0.245\textwidth}\centering
  \includegraphics[width=\textwidth,height=0.6\textwidth]{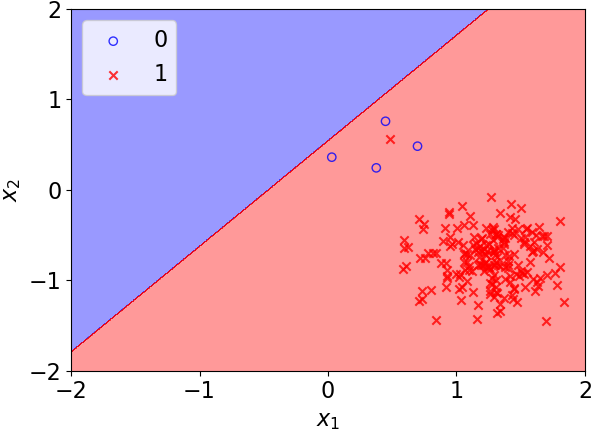}
\end{subfigure} 
\begin{subfigure}[b]{0.245\textwidth}\centering
  \includegraphics[width=\textwidth,height=0.6\textwidth]{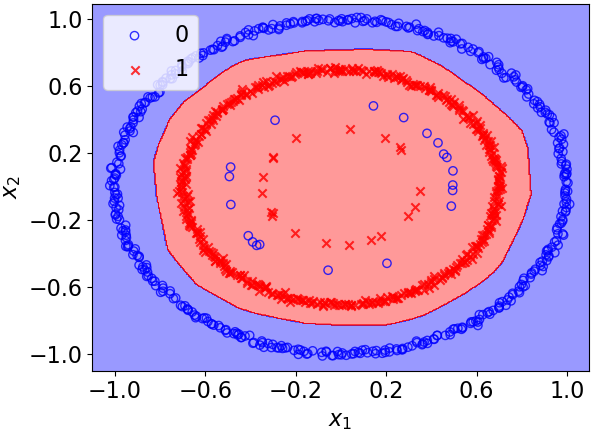}
\end{subfigure}
\begin{subfigure}[b]{0.245\textwidth}
  \includegraphics[width=\textwidth,height=0.6\textwidth]{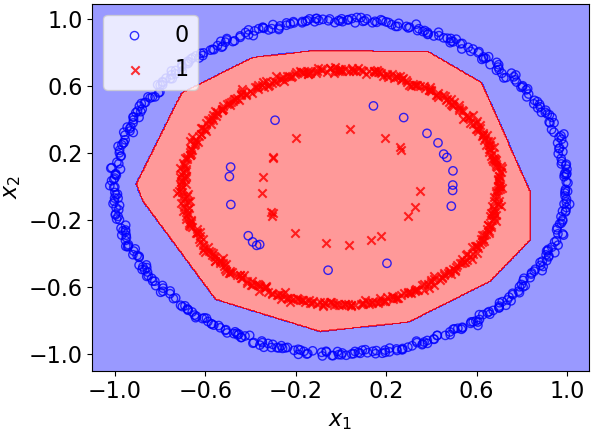}
\end{subfigure}
\begin{subfigure}[b]{0.245\textwidth}\centering
  \includegraphics[width=\textwidth,height=0.6\textwidth]{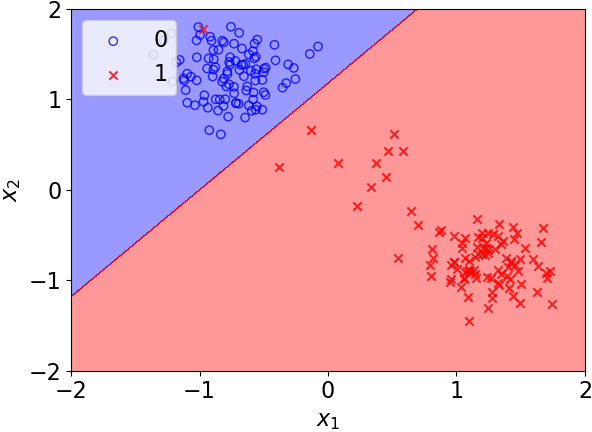}
\caption{with linear classifier} \label{fig:2d_illustration_a}
\end{subfigure} 
\begin{subfigure}[b]{0.245\textwidth}\centering
  \includegraphics[width=\textwidth,height=0.6\textwidth]{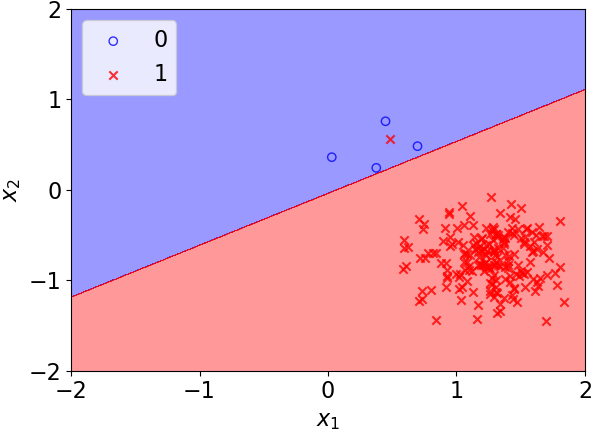}
  \caption{with linear classifier} \label{fig:2d_illustration_b}
\end{subfigure} 
\begin{subfigure}[b]{0.245\textwidth}\centering
  \includegraphics[width=\textwidth,height=0.6\textwidth]{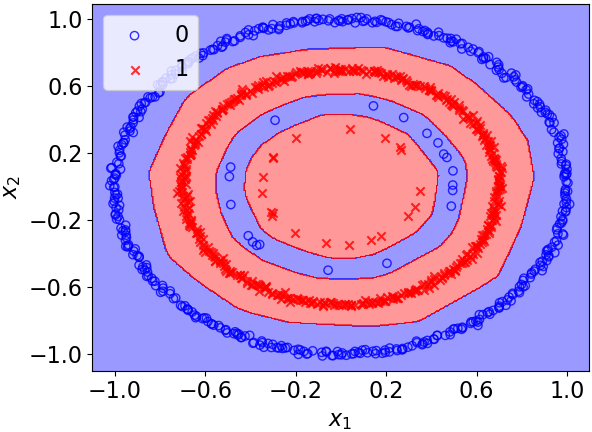}
  \caption{with small ANN} \label{fig:2d_illustration_c}
\end{subfigure}
\begin{subfigure}[b]{0.245\textwidth}
  \includegraphics[width=\textwidth,height=0.6\textwidth]{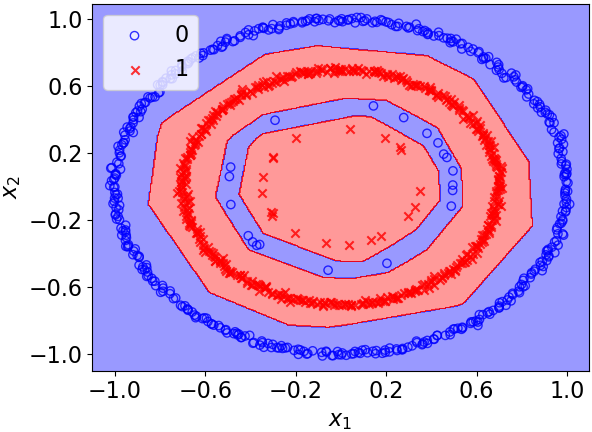}
  \caption{with tiny ANN} \label{fig:2d_illustration_d}
\end{subfigure}
\caption{Decision boundaries of mini-batch SGD predictors (\textbf{top} row) and ordered SGD predictors (\textbf{bottom} row) with 2D synthetic datasets for binary classification. In these  examples, ordered SGD predictors correctly classify more data points than mini-batch SGD predictors, because a ordered SGD predictor can focus more on a  smaller yet informative subset of data points, instead of focusing on the average loss dominated by  a larger subset of data points. } 
\label{fig:2d_illustration}
\end{figure*}

Figure \ref{fig:2d_illustration} illustrates the motivation of ordered SGD by looking at two-dimensional toy problems of binary classification. To avoid an extra freedom due to the hyper-parameter $q$, we employed a  single fixed procedure to  set the hyper-parameter $q$ in the experiments for Figure \ref{fig:2d_illustration} and other experiments in Section \ref{sec:nume}, which is further explained in  Section \ref{sec:nume}. The details of the experimental settings for Figure \ref{fig:2d_illustration} are presented in Section \ref{sec:nume} and in Appendix \ref{sec:app:exp}. 

It can be seen from Figure \ref{fig:2d_illustration} that ordered SGD adapts better to imbalanced data distributions compared with mini-batch SGD. It can better capture the information of the smaller sub-clusters that contribute less to the empirical average loss $L(\theta)$: e.g.,  the  small sub-clusters in the middle of Figures \ref{fig:2d_illustration_a} and  \ref{fig:2d_illustration_b}, as well as the small inner ring structure in Figures \ref{fig:2d_illustration_c} and \ref{fig:2d_illustration_d} (the two inner rings contain only 40 data points while the two outer rings contain 960 data points). The smaller sub-clusters  are informative for training a classifier when  they are not outliers or by-products of noise.  A sub-cluster of data points would be less  likely to be an outlier as the size of the sub-cluster increases. The value of $q$ in  ordered SGD can control the size of sub-clusters that a classifier should be sensitive to. With smaller $q$, the output model becomes more sensitive to smaller sub-clusters. In an extreme case with $q=1$ and $n=s$, ordered SGD minimizes the maximal loss \citep{shalev2016minimizing} that is highly sensitive to every smallest sub-cluster of each single data point.


\vspace{-5pt}
\section{Optimization Theory} \label{sec:optimization}
\vspace{-5pt}

In this section, we answer the following three questions: (1) what objective function does ordered SGD solve as an optimization method, (2) what is the convergence rate of ordered SGD for minimizing the new objective function, and (3) what is the asymptotic structure of the new objective function.

Similarly to the notation of order statistics, we first introduce the notation of ordered indexes: given a model parameter $\theta$, let $ L_{(1)}(\theta) \ge L_{(2)}(\theta) \ge \cdots \ge L_{(n)}(\theta)$ be the decreasing values of the individual losses   $L_1(\theta),\ldots,L_n(\theta)$, where $(j)\in\{1,\dots,n\}$  (for all $j \in \{1,\dots,n\}$). That is, $\{(1),\ldots,(n)\}$ as a perturbation of $\{1,\dots,n\}$ defines the order of sample indexes by loss values. Throughout this paper,  whenever we  encounter ties on the values, we employ a tie-breaking rule in order to ensure the uniqueness of such an order.\footnote{In the case of ties, the order is defined by the order of the original indexes $(1,2,\ldots,n)$ of $L_1(\theta),\dots,L_n(\theta)$; i.e., if $L_{i_1}(\theta)=L_{i_2}(\theta)$ and $i_1<i_2$, then $i_1$ appears before $i_2$ in the sequence $({(1)}, {(2)}, \dots, {(n)})$. }  Theorem \ref{thm:equiv} shows that ordered SGD is a stochastic first-order method for minimizing the new ordered empirical loss $L_q(\theta)$.
\begin{theorem}\label{thm:equiv}
Consider the following objective function: \vspace{-8pt}
\begin{equation}{\label{eq:Lq}}
    L_{q}(\theta) := \frac{1}{q} \sum_{j=1}^n \gamma_{j} L_{(j)}(\theta)+R(\theta),
\end{equation} 
where the parameter $\gamma_j$ depends on the tuple $(n,s,q)$, and is defined by \vspace{-8pt}
\begin{equation}\label{eq:gamma}
    \gamma_{j}:=\frac{\sum_{l=0}^{q-1}\binom{j-1}{l}\binom{n-j}{s-l-1}}{\binom{n}{s}}.
\end{equation}
Then, ordered SGD is a stochastic first-order method for minimizing $L_q(\theta)$ in the sense that $\tg^{t}$ used in ordered SGD is an unbiased estimator of a (sub-)gradient of $L_q(\theta)$.
\end{theorem}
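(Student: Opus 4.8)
The plan is to compute $\mathbb{E}[\tg^t]$ over the random draw of the mini-batch and then certify that the result is a genuine (sub-)gradient of $L_q$ at $\theta^t$. First I would make the estimator explicit: fix once and for all representatives $g_i\in\partial L_i(\theta^t)$ for each $i$ and $g_R\in\partial R(\theta^t)$, so that by the sum rule for subgradients one admissible choice in line 5 is $\tg^t=\frac{1}{q}\sum_{i\in Q} g_i + g_R$. The only randomness sits in $S$, and hence in $Q=Q(S)$. Writing $\sum_{i\in Q} g_i=\sum_{i=1}^n \mathbf{1}[i\in Q]\, g_i$ and using linearity of expectation, the whole statement reduces to computing the selection probability $\Pr(i\in Q)$ for each sample $i$.

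The combinatorial core is to show that, after reindexing by loss rank, $\Pr((j)\in Q)=\gamma_j$. Here I would use the tie-breaking rule: it turns ``has a strictly larger loss at $\theta^t$'' into a strict total order, so that the rank-$j$ sample is outranked by exactly the $j-1$ samples $(1),\dots,(j-1)$ and outranks exactly the $n-j$ samples $(j+1),\dots,(n)$. The event $\{(j)\in Q\}$ then decomposes as: the rank-$j$ sample is drawn into $S$, and among the remaining $s-1$ slots at most $q-1$ are filled from the higher-loss group. Conditioning on drawing $l$ of the $j-1$ higher-loss samples and $s-1-l$ of the $n-j$ lower-loss samples, summing $l$ from $0$ to $q-1$, and dividing by the $\binom{n}{s}$ equally likely mini-batches gives precisely the expression in \eqref{eq:gamma}. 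Substituting back yields $\mathbb{E}[\tg^t]=\frac{1}{q}\sum_{j=1}^n \gamma_j\, g_{(j)} + g_R$, with $g_{(j)}$ the fixed subgradient of the rank-$j$ loss.

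It then remains to check that this vector lies in $\partial L_q(\theta^t)$. I would first note that $\gamma_j$ is non-increasing in $j$ (a higher-loss sample is more likely to land in the top-$q$), so that with the decreasing losses $L_{(1)}\ge\cdots\ge L_{(n)}$ the rearrangement inequality gives $\sum_j \gamma_j L_{(j)}(\theta)=\max_{\sigma}\sum_j \gamma_j L_{\sigma(j)}(\theta)$, exhibiting the first term of $L_q$ as a finite maximum of sums of the $L_i$. At $\theta^t$ the sorting permutation attains this maximum, so the subgradient rule for a pointwise maximum identifies $\frac{1}{q}\sum_j \gamma_j g_{(j)}$ as a subgradient of that term; adding $g_R$ closes the convex case. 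For the weakly convex case I would carry the same argument through with the $\rho$-weakly convex subgradient of the footnote, using that a pointwise maximum of $\rho$-weakly convex functions is again $\rho$-weakly convex (the common shift $\frac{\rho}{2}\|\theta\|^2$ passes through the maximum).

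I expect the two genuinely delicate points to be, first, the bookkeeping in the counting argument -- getting the tie-breaking to produce the clean $(j-1,\,n-j)$ split and matching the summation limit $q-1$ to the stated formula -- and, second, the passage in the last step from ``weighted sum of order statistics'' to an honest element of $\partial L_q(\theta^t)$, which is where the monotonicity of $\gamma$ and the maximum-representation carry the load; the expectation computation itself is then routine.
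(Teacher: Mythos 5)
Your proposal is correct, and its combinatorial core coincides with the paper's own proof: fix subgradient representatives so that $\tg^t=\frac{1}{q}\sum_{i\in Q}g_i+g_R$, reduce everything by linearity of expectation to the selection probabilities $\Pr\left((j)\in Q\right)$, and evaluate these by conditioning on the number $l\le q-1$ of higher-ranked samples drawn into $S$, which yields $\Pr\left((j)\in Q\right)=\gamma_j$ exactly as in the paper. Where you genuinely diverge is the final certification that $\frac{1}{q}\sum_{j}\gamma_j g_{(j)}+g_R$ lies in $\partial L_q(\theta^t)$. The paper dispatches this step with a one-line appeal to ``additivity of the sub-gradient,'' which, taken literally, does not address the real subtlety: the rank map $j\mapsto(j)$ varies with $\theta$, so $L_q$ is not a fixed weighted sum of the $L_i$, and the sum rule alone does not give the required global subgradient inequality. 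Your max-representation handles precisely this point: since the $\gamma_j$ are nonincreasing, the rearrangement inequality gives $\sum_j\gamma_jL_{(j)}(\theta)=\max_\sigma\sum_j\gamma_jL_{\sigma(j)}(\theta)$ for every $\theta$; the sorting permutation is active at $\theta^t$, so a subgradient of the active (fixed) weighted sum is a subgradient of the maximum; and the argument survives in the weakly convex case because the coefficients $\gamma_j/q$ are nonnegative and sum to one (so each fixed weighted sum is $\rho$-weakly convex) and the quadratic shift commutes with the pointwise maximum. In this respect your write-up is more careful than the paper's. The one loose end is the monotonicity $\gamma_1\ge\gamma_2\ge\cdots\ge\gamma_n$, which you justify only by intuition; it is true and quick to prove from your own identity $\gamma_j=\Pr\left((j)\in Q\right)$ by an exchange argument: on mini-batches containing $(j+1)$ but not $(j)$, swapping the two samples is a measure-preserving bijection that sends the event $\{(j+1)\in Q\}$ into $\{(j)\in Q\}$, while on mini-batches containing both, $(j+1)\in Q$ forces $(j)\in Q$; hence $\Pr\left((j+1)\in Q\right)\le\Pr\left((j)\in Q\right)$. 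With that small lemma supplied, your proof is complete.
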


\begin{figure*}[t!]
\centering
\begin{subfigure}[b]{0.329\textwidth}\centering
  \includegraphics[width=\textwidth,height=0.7\textwidth]{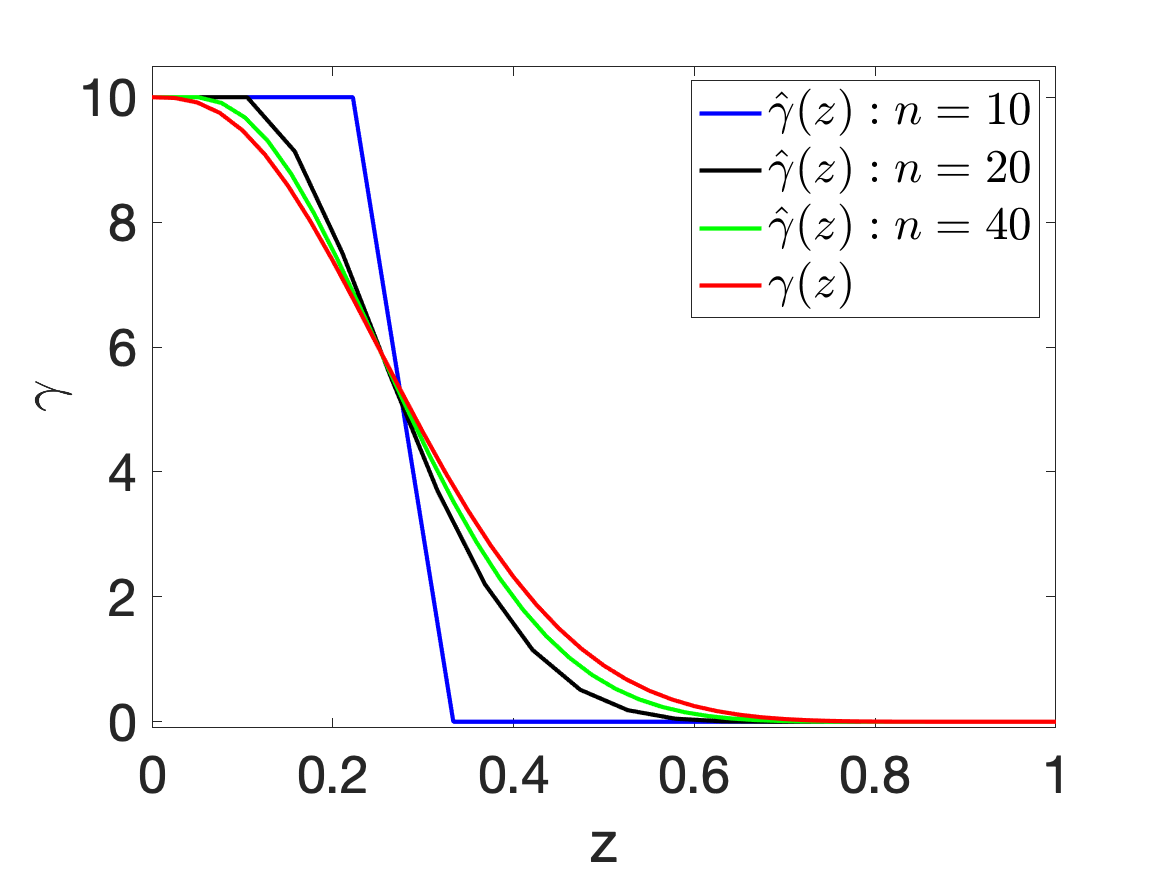}
\caption{$(s, q)=(10, 3)$} \label{fig:sub_gamma_1}
\end{subfigure} 
\begin{subfigure}[b]{0.329\textwidth}\centering
  \includegraphics[width=\textwidth,height=0.7\textwidth]{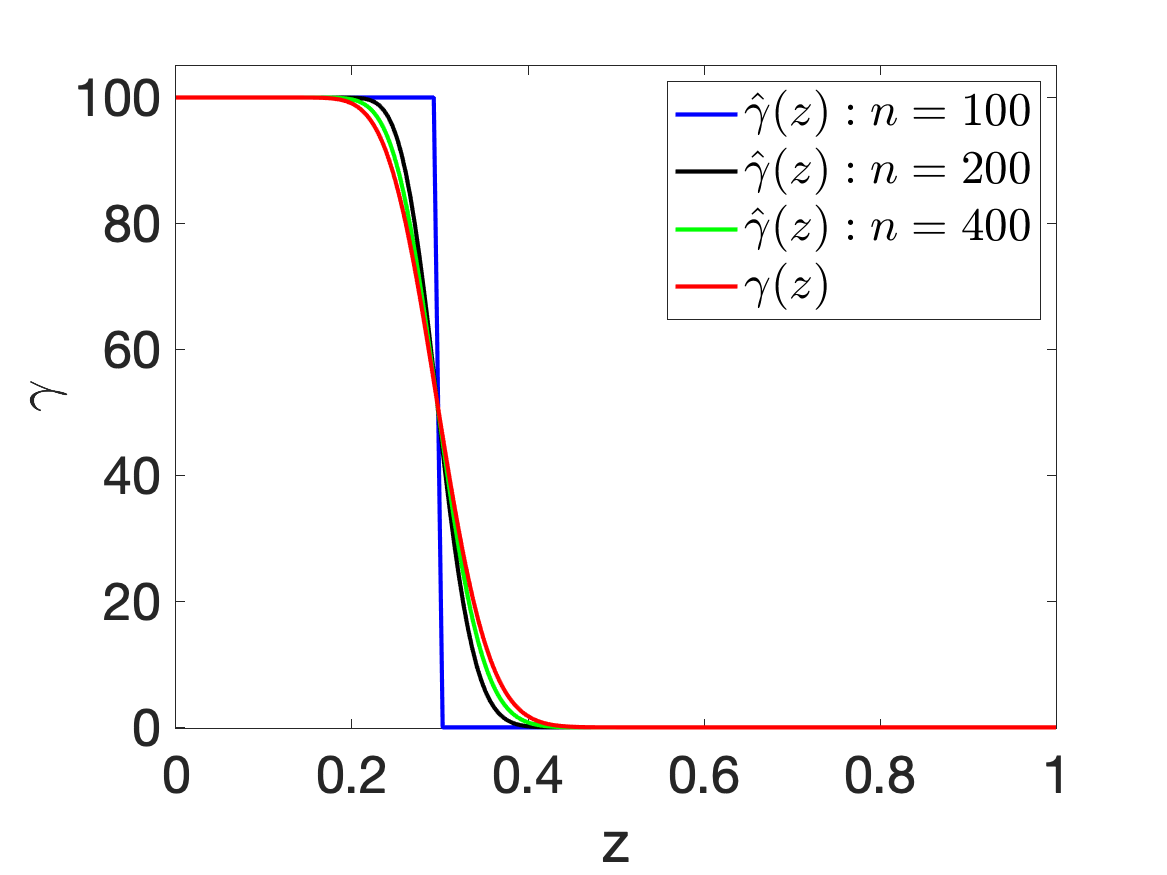}
  \caption{$(s, q)=(100, 30)$} \label{fig:sub_gamma_2}
\end{subfigure} 
\begin{subfigure}[b]{0.329\textwidth}\centering
  \includegraphics[width=\textwidth,height=0.7\textwidth]{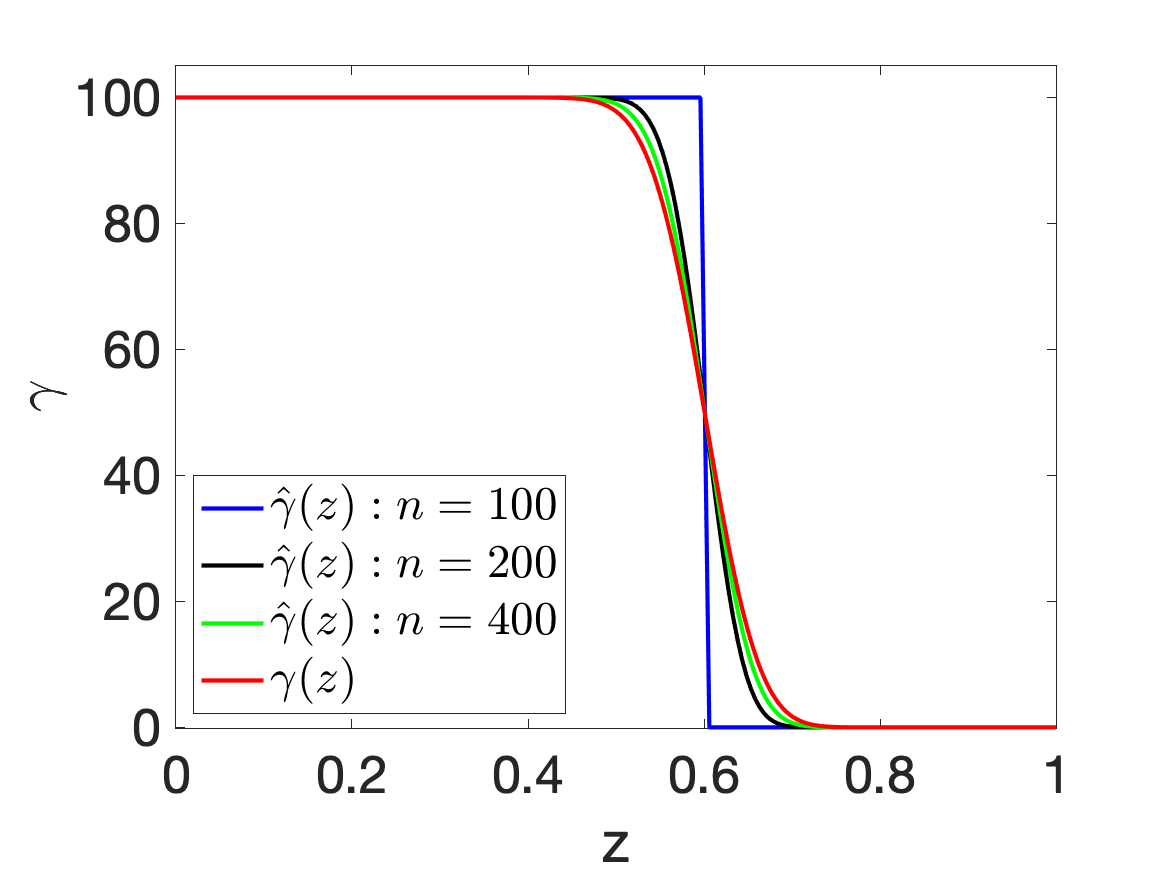}
  \caption{$(s, q)=(100, 60)$} \label{fig:sub_gamma_3}
\end{subfigure} 
\caption{$\hgamma(z)$ and $\gamma(z)$ for different $(n,s,q)$ where   $\hat{\gamma}$ is a rescaled version of $\gamma_j$: $\hgamma(j/n)=n\gamma_j$ . }
\label{fig:plots_gamma}
\end{figure*}

Although the order of individual losses change with different $\theta$, $L_q$ is a well-defined function. For any given $\theta$, the order of individual losses is fixed and $L_q(\theta)$ has a unique value, which means
$L_q(\theta)$ is a function of $\theta$.

All proofs in this paper are deferred to Appendix \ref{sec:app:proof}. As we can see from Theorem \ref{thm:equiv},  the objective function minimized by ordered SGD (i.e., $L_{q}(\theta)$) depends on the hyper-parameters of the algorithm through the values of $\gamma_{j}$. Therefore, it is of practical interest to obtain deeper understandings on how the hyper-parameters $(n,s,q)$ affects the objective function $L_q(\theta)$ through $\gamma_{j}$. The next proposition presents the asymptotic value of $\gamma_j$ (when $n\rightarrow\infty$), which shows that a  rescaled $\gamma_j$ converges to the cumulative distribution function of a Beta distribution:
\begin{proposition}\label{prop:limit}
Denote $z=\frac{j}{n}$ and  $\gamma(z):=\sum_{l=0}^{q-1}z^{l}(1-z)^{s-l-1}\frac{s!}{l!(s-l-1)!}$. Then, it holds that 
$$
\lim_{j,n\rightarrow \infty, j/n=z} \gamma_j = \frac{1}{n} \gamma(z).
$$
Moreover, it holds that $1-\frac{1}{s}\gamma(z)$ is the cumulative distribution function of $\text{Beta}(z;q, s-q)$. 
\end{proposition}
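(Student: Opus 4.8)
The plan is to handle the two assertions separately: the asymptotic limit is a leading-order expansion of the binomial coefficients in \eqref{eq:gamma}, while the Beta-distribution claim is a purely algebraic identity relating a binomial tail sum to a regularized incomplete beta function.

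For the limit, I would fix $s$ and $q$ (which stay constant) and let $j,n\to\infty$ with $j/n=z$, so the sum in \eqref{eq:gamma} has a fixed finite number of terms indexed by $l\in\{0,\dots,q-1\}$. For each such $l$, I would expand $\binom{j-1}{l}=\frac{j^{l}}{l!}(1+O(1/j))$, $\binom{n-j}{s-l-1}=\frac{(n-j)^{s-l-1}}{(s-l-1)!}(1+O(1/(n-j)))$, and $\binom{n}{s}=\frac{n^{s}}{s!}(1+o(1))$, retaining only the leading power of $n$. Substituting $j=nz$ and $n-j=n(1-z)$, the powers of $n$ collect to $n^{l+(s-l-1)-s}=n^{-1}$, so each summand behaves like $\frac{1}{n}\cdot\frac{s!}{l!(s-l-1)!}z^{l}(1-z)^{s-l-1}$. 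Summing the finitely many terms yields $n\gamma_{j}\to\gamma(z)$, which is precisely the stated limit (matching the rescaling $\hat\gamma(j/n)=n\gamma_{j}$ from the figure caption). Since the number of terms is fixed, the $o(1)$ errors are trivially uniform and no delicate control is required.

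For the Beta statement, the first step is to observe that $\frac{s!}{l!(s-l-1)!}=s\binom{s-1}{l}$, so $\frac{1}{s}\gamma(z)=\sum_{l=0}^{q-1}\binom{s-1}{l}z^{l}(1-z)^{s-1-l}=\Pr[X\le q-1]$ for $X\sim\mathrm{Bin}(s-1,z)$. By the binomial theorem $\sum_{l=0}^{s-1}\binom{s-1}{l}z^{l}(1-z)^{s-1-l}=1$, hence $F(z):=1-\frac{1}{s}\gamma(z)=\sum_{l=q}^{s-1}\binom{s-1}{l}z^{l}(1-z)^{s-1-l}$ is the upper tail $\Pr[X\ge q]$. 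It then remains to identify this binomial tail with the $\mathrm{Beta}(q,s-q)$ CDF $I_{z}(q,s-q)$.

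I would establish that identification by differentiation. Writing $g_{l}(z)=\binom{s-1}{l}z^{l}(1-z)^{s-1-l}$ and using $\binom{s-1}{l}l=(s-1)\binom{s-2}{l-1}$ together with $\binom{s-1}{l}(s-1-l)=(s-1)\binom{s-2}{l}$, one checks that $g_{l}'(z)=h_{l}(z)-h_{l+1}(z)$ where $h_{l}(z)=(s-1)\binom{s-2}{l-1}z^{l-1}(1-z)^{s-1-l}$. Thus $F'(z)=\sum_{l=q}^{s-1}(h_{l}-h_{l+1})$ telescopes to $h_{q}(z)=\frac{(s-1)!}{(q-1)!(s-q-1)!}z^{q-1}(1-z)^{s-q-1}$, which is exactly the $\mathrm{Beta}(q,s-q)$ density since $B(q,s-q)=\frac{(q-1)!(s-q-1)!}{(s-1)!}$. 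As $F(0)=0$ agrees with the Beta CDF at the origin, integrating from $0$ to $z$ gives $F(z)=I_{z}(q,s-q)$, as claimed. The main obstacle is this final identification: the telescoping derivative that bridges the discrete binomial tail and the continuous Beta CDF, i.e.\ the classical binomial–Beta duality; everything else is routine bookkeeping.
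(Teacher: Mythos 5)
Your proof is correct, and its overall strategy parallels the paper's --- leading-order asymptotics of binomial coefficients for the limit, and a differentiate-and-telescope argument for the Beta identification --- but both halves are executed differently in ways worth noting. For the limit, the paper first rewrites $\gamma_j$ through the identity $\frac{1}{n}\frac{\binom{n-s}{j-1-l}}{\binom{n-1}{j-1}}\frac{s!}{l!(s-l-1)!}=\frac{\binom{j-1}{l}\binom{n-j}{s-l-1}}{\binom{n}{s}}$ and then applies Stirling's approximation to the ratio $\binom{n-s}{j-1-l}/\binom{n-1}{j-1}$, in which both arguments grow; your direct expansion $\binom{m}{k}=\frac{m^{k}}{k!}(1+O(1/m))$ for fixed lower index avoids Stirling altogether and is more elementary, since every binomial coefficient in \eqref{eq:gamma} has a bounded lower index once $s$ and $q$ are held fixed. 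For the Beta claim, the paper differentiates $\gamma(z)$ itself, telescopes, and concludes from $\frac{d}{dz}\gamma(z)\propto -z^{q-1}(1-z)^{s-q-1}$ (its displayed constant contains a typo, a dangling $l$ in $\frac{s!}{l!(s-l-2)!}$), leaving the normalization and boundary conditions implicit; you instead recognize $\frac{1}{s}\gamma(z)$ as the $\mathrm{Bin}(s-1,z)$ lower tail $\Pr[X\le q-1]$, differentiate the complementary tail, and pin down both the exact constant $\frac{(s-1)!}{(q-1)!(s-q-1)!}$ and the boundary value $F(0)=0$. Your version is therefore slightly more complete on this point --- proportionality of the density alone does not identify a CDF without a constant or boundary check --- and the binomial--Beta duality framing explains why the identity holds rather than merely verifying it.
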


To better illustrate the structure of  $\gamma_j$ in the non-asymptotic regime,  Figure \ref{fig:plots_gamma} plots $\hat{\gamma}(z)$ and $\gamma(z)$ for different values of $(n,s,q)$ where  $\hat{\gamma}(z)$ is a rescaled version of $\gamma_j$ defined by $\hgamma{(j/n)}=n\gamma_j$ (and the value of $\hgamma(\cdot)$ between $j/n$ and $(j+1)/n$ is defined by linear interpolation for better visualization). As we can see from Figure \ref{fig:plots_gamma}, $\hat{\gamma}(z)$ monotonically decays.  In each subfigure, with fixed $s,q$, the cliff gets smoother and $\hat{\gamma}(z)$ converges to $\gamma(z)$ as $n$ increases. Comparing Figures \ref{fig:sub_gamma_1} and  \ref{fig:sub_gamma_2}, we can see that as $s$, $q$ and $n$ all increase proportionally, the cliff gets steeper.  Comparing Figures \ref{fig:sub_gamma_2} and  \ref{fig:sub_gamma_3}, we can see that with fixed $n$ and $q$, the cliff shifts to the right as $q$ increases.

As a direct extension of Theorem \ref{thm:equiv}, we can now obtain the computational guarantees of ordered SGD for minimizing $L_q(\theta)$ by taking advantage of the classic convergence results of SGD:

\begin{theorem}\label{cor:opt}
 Let  $(\theta^t)_{t=0}^{T}$ be a sequence generated by  ordered SGD (Algorithm \ref{al:qSGD}). Suppose that $L_i(\cdot)$ is $G_1$-Lipschitz continuous for $i=1,\ldots,n$, and  $R(\cdot)$ is $G_2$-Lipschitz continuous. Suppose that there exists a finite $\theta^* \in \argmin_\theta L_q(\theta)$ and $L_q(\theta^*)$ is finite. Then, the following two statements hold:
\vspace{-6pt}
\begin{enumerate}[leftmargin=20pt]
\item[(1)]
(Convex setting). If   $L_i(\cdot)$ and   $R(\cdot)$ are both convex,  for any step-size $\eta_t$, it holds that
\begin{align*}
&\min_{0\le t\le n} \EE [L_q(\theta^t) - L_q(\theta^*)] 
\\ &\le \frac{2(G_1^2+G_2^2)\sum_{t=0}^T \eta_t^2 + \|\theta^*-\theta^0\|^2}{2\sum_{t=0}^T \eta_t} \ .
\end{align*}
\item[(2)] 
(Weakly convex setting) Suppose that  $L_i(\cdot)$ is $\rho$-weakly convex (i.e., $L_i(\theta)+\frac{\rho}{2}\|\theta\|^2$ is convex) and $R(\cdot)$ is convex. Recall the definition of Moreau envelope: $L_q^{\lambda}(\theta):=\min_{\beta}\{L_q(\beta)+\frac{1}{2\lambda}\|\beta-\theta\|^2\}$. Denote $\btheta^T$ as a random variable taking value in $\{\theta^0,\theta^1,\ldots,\theta^T\}$ according to the probability distribution $\PP(\btheta^T=\theta^t)=\frac{\eta_t}{\sum_{t=0}^T \eta_t}$. Then for any constant $\hrho>\rho$, it holds that
\begin{align*}
&\EE[\|\nabla L_q^{1/\hrho}(\btheta^T)\|^2] 
\\ & \scalebox{0.9}{$\displaystyle \le \frac{\hrho}{\hrho-\rho}\frac{\left(L_q^{1/\hrho}(\theta^0) - L_q(\theta^*)\right)+\hrho (G_1^2+G_2^2)\sum_{t=0}^T\eta_t^2}{\sum_{t=0}^T\eta_t} .$}
\end{align*}

\end{enumerate}
\end{theorem}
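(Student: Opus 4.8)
The plan is to leverage Theorem~\ref{thm:equiv}, which guarantees that $\tg^t$ is an unbiased estimator of a subgradient of $L_q$, so that ordered SGD is \emph{exactly} (sub)gradient descent applied to $L_q$; both parts then reduce to classical stochastic-optimization analyses once two preliminary facts are in hand. The first is a \emph{structural} fact: $L_q$ inherits the convexity (resp.\ $\rho$-weak convexity) of the individual losses. The key is the ordered-weighted-averaging representation $\sum_{j}\gamma_j L_{(j)}(\theta)=\max_{\sigma}\sum_{j}\gamma_j L_{\sigma(j)}(\theta)$, valid because the weights $\gamma_1\ge\gamma_2\ge\cdots\ge\gamma_n\ge0$ are non-increasing (which can be checked directly from~\eqref{eq:gamma}) and the $L_{(j)}$ are sorted in the same decreasing order, so the rearrangement inequality turns the reordered sum into a pointwise maximum over permutations $\sigma$. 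Each inner term is a nonnegative combination of the $L_i$ together with $R$, hence convex (resp.\ $\rho$-weakly convex, using $\sum_j\gamma_j=q$ so that the weights $\gamma_j/q$ sum to one); a pointwise maximum preserves both properties, so $L_q$ is convex (resp.\ $\rho$-weakly convex). The second is a \emph{gradient bound}: since $|Q|=q$ and each $L_i$ is $G_1$-Lipschitz while $R$ is $G_2$-Lipschitz, every $\tg^t\in\partial L_Q(\theta^t)$ satisfies $\|\tg^t\|\le G_1+G_2$, hence $\EE[\|\tg^t\|^2]\le 2(G_1^2+G_2^2)$.

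For part~(1) I would run the standard telescoping argument. Writing $\theta^{t+1}=\theta^t-\eta_t\tg^t$ and expanding $\|\theta^{t+1}-\theta^*\|^2$, I take expectation conditioned on $\theta^t$; unbiasedness replaces $\tg^t$ by a subgradient $g^t\in\partial L_q(\theta^t)$, and convexity gives $\langle g^t,\theta^t-\theta^*\rangle\ge L_q(\theta^t)-L_q(\theta^*)$. This yields the one-step inequality $\EE[\|\theta^{t+1}-\theta^*\|^2]\le \EE[\|\theta^t-\theta^*\|^2]-2\eta_t\,\EE[L_q(\theta^t)-L_q(\theta^*)]+2\eta_t^2(G_1^2+G_2^2)$. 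Summing over $t$, telescoping the distance terms, dropping the nonnegative terminal distance, and lower-bounding $\sum_t\eta_t\,\EE[L_q(\theta^t)-L_q(\theta^*)]$ by $\big(\min_t\EE[L_q(\theta^t)-L_q(\theta^*)]\big)\sum_t\eta_t$ gives the stated bound.

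For part~(2) I would follow the Moreau-envelope analysis of weakly convex stochastic optimization. Set $\hat{\theta}^t:=\mathrm{prox}_{L_q/\hrho}(\theta^t)=\argmin_\beta\{L_q(\beta)+\frac{\hrho}{2}\|\beta-\theta^t\|^2\}$, so that $L_q^{1/\hrho}(\theta^t)=L_q(\hat{\theta}^t)+\frac{\hrho}{2}\|\hat{\theta}^t-\theta^t\|^2$ and $\nabla L_q^{1/\hrho}(\theta^t)=\hrho(\theta^t-\hat{\theta}^t)$, both valid because $\hrho>\rho$ makes the proximal subproblem strongly convex. Using $\hat{\theta}^t$ as a feasible point for the minimization defining $L_q^{1/\hrho}(\theta^{t+1})$, expanding $\|\hat{\theta}^t-\theta^{t+1}\|^2$, taking conditional expectation, and combining the $\rho$-weak-convexity inequality for $\langle g^t,\hat{\theta}^t-\theta^t\rangle$ with the prox optimality $L_q(\hat{\theta}^t)-L_q(\theta^t)\le-\frac{\hrho}{2}\|\hat{\theta}^t-\theta^t\|^2$ produces the descent inequality $\EE[L_q^{1/\hrho}(\theta^{t+1})\mid\theta^t]\le L_q^{1/\hrho}(\theta^t)-\frac{(\hrho-\rho)\eta_t}{2\hrho}\|\nabla L_q^{1/\hrho}(\theta^t)\|^2+\hrho\eta_t^2(G_1^2+G_2^2)$, after rewriting $\|\hat{\theta}^t-\theta^t\|^2=\hrho^{-2}\|\nabla L_q^{1/\hrho}(\theta^t)\|^2$. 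Telescoping over $t=0,\dots,T$, using $L_q^{1/\hrho}(\theta^{T+1})\ge L_q(\theta^*)$, and dividing by $\sum_t\eta_t$ together with the sampling law of $\btheta^T$ then yields the claimed rate, up to tracking the numerical constants to match the stated form.

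The main obstacle I anticipate is the structural step, namely proving that the $\theta$-dependent reordering in $L_q$ does not destroy (weak) convexity: the reindexing $(j)$ is itself a function of $\theta$, so one cannot differentiate termwise, and it is the rearrangement/OWA representation as a maximum over permutations---which crucially requires the monotonicity $\gamma_1\ge\cdots\ge\gamma_n$---that makes the problem tractable. Once this is established, part~(1) is routine, while part~(2) requires only careful bookkeeping of the Moreau-envelope identities and the telescoping, which are standard given $\hrho>\rho$.
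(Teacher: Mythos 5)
Your proposal follows essentially the same route as the paper's proof: both use Theorem \ref{thm:equiv} to identify $\EE[\tg^t\mid\theta^t]$ with a (sub)gradient of $L_q$, both bound the second moment via Lipschitzness as $\|\tg^t\|^2\le 2(G_1^2+G_2^2)$, and both then reduce to classical analyses --- the paper by citing the convex stochastic subgradient analysis for part (1) and Theorem 2.1 of Davis and Drusvyatskiy for part (2) as black boxes, you by re-deriving them. One place where your write-up is actually more complete than the paper's: the paper's proof of part (2) simply asserts that $L_q(\theta)+\frac{\rho}{2}\|\theta\|^2=\frac{1}{q}\sum_{j=1}^n\gamma_j\bigl(L_{(j)}(\theta)+\frac{\rho}{2}\|\theta\|^2\bigr)+R(\theta)$ is convex, glossing over the fact that the ordering $(j)$ depends on $\theta$; your rearrangement/max-over-permutations argument (using $\gamma_1\ge\cdots\ge\gamma_n\ge 0$ and $\sum_j\gamma_j=q$) is precisely the missing justification, and the same structural fact is implicitly needed in part (1) so that the convex subgradient inequality for $L_q$ is legitimate.

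There is, however, one quantitative gap in your part (2). The prox-optimality inequality you invoke, $L_q(\hat\theta^t)-L_q(\theta^t)\le-\frac{\hrho}{2}\|\hat\theta^t-\theta^t\|^2$, does yield the descent inequality you state, with coefficient $\frac{(\hrho-\rho)\eta_t}{2\hrho}$ on $\|\nabla L_q^{1/\hrho}(\theta^t)\|^2$; but telescoping that inequality gives the final bound with the factor $\frac{2\hrho}{\hrho-\rho}$, not the theorem's $\frac{\hrho}{\hrho-\rho}$, and no amount of bookkeeping closes this factor of $2$. To recover the stated constant you must use the fact that $\beta\mapsto L_q(\beta)+\frac{\hrho}{2}\|\beta-\theta^t\|^2$ is $(\hrho-\rho)$-strongly convex with minimizer $\hat\theta^t$, which yields the sharper inequality $L_q(\hat\theta^t)-L_q(\theta^t)\le-\frac{\hrho}{2}\|\hat\theta^t-\theta^t\|^2-\frac{\hrho-\rho}{2}\|\hat\theta^t-\theta^t\|^2$. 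This doubles the descent coefficient to $\frac{(\hrho-\rho)\eta_t}{\hrho}\|\nabla L_q^{1/\hrho}(\theta^t)\|^2$ and then telescoping, together with $L_q^{1/\hrho}(\theta^{T+1})\ge L_q(\theta^*)$ and the sampling law of $\btheta^T$, reproduces the paper's bound exactly; this strong-convexity step is what the cited Davis--Drusvyatskiy argument performs. The rest of your sketch, including all of part (1), is correct as written.
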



Theorem \ref{cor:opt} shows that in particular, if we choose $\eta_t \sim O(1/\sqrt{t})$,  the optimality gap $\min_t L_q(\theta^t) - L_q(\theta^*)$ and $\EE[\|\nabla L_q^{1/\hrho}(\btheta_T)\|^2]$ decay at the rate of $\tilde O(1/\sqrt{t})$  (note that $\lim_{T \rightarrow \infty} \frac{\sum_{t=0}^T \eta_t^2}{\sum_{t=0}^T \eta_t}=0$ with $\eta_t \sim O(1/\sqrt{t})$). 

The Lipschitz continuity assumption  in Theorem \ref{cor:opt} is a standard assumption for the analysis of stochastic optimization algorithms. This assumption is generally  satisfied with logistic loss, hinge loss and Huber loss without any constraints on $\theta^t$, and with square loss when one can presume that $\theta^t$ stays in a  compact space (which is typically the case being interested in practice). For the weakly convex setting, $\EE\|\nabla \varphi^{1/2\rho}(\theta^k)\|^2$ (appeared in Theorem \ref{cor:opt} (2)) is a natural measure of the near-stationarity for a non-differentiable weakly convex function $\varphi:\theta\mapsto\varphi(\theta)$ \citep{davis2018stochastic}. The weak convexity (also known as negative strong convexity or almost convexity) is a standard assumption for analyzing non-convex optimization problem in optimization literature \citep{davis2018stochastic,allen2017natasha}. With a standard loss criterion such as logistic loss, the individual objective $L_i(\cdot)$ with a neural network using sigmoid or tanh activation functions is weakly convex (neural network with ReLU activation function is not weakly convex and falls out of our setting). 



\vspace{-0.19cm}

\section{Generalization Bound} \label{sec:generalization}

\vspace{-0.19cm}

This section presents the generalization theory for ordered SGD. To make the dependence on a training dataset $\Dcal$ explicit, we  define $L(\theta;\Dcal):=\frac{1}{n}\sum_{i=1}^n  L_i(\theta;\Dcal)$ and $L_{q}(\theta;\Dcal):=\frac{1}{q} \sum_{j=1}^m \gamma_{j} L_{(j)}(\theta;\Dcal)$ by rewriting $L_i(\theta;\Dcal)=L_i(\theta)$ and $L_{(j)}(\theta;\Dcal)=L_{(j)}(\theta)$, where $((j))_{j=1}^n$ defines the order of sample indexes by the loss value, as stated in Section \ref{sec:optimization}. Denote $r_i(\theta; \Dcal) =  \sum_{j=1}^{n} \mathbbm{1}\{i=(j)\}\gamma_{j}$ where $(j)$ depends on $(\theta, \Dcal)$. Given an arbitrary set $\Theta\subseteq\RR^{d_\theta}$, we define  $\Rfra_{n}(\Theta)$ as the (standard) Rademacher
complexity of the set $\{ (x,y)\mapsto\ell(f(x;\theta),y): \theta \in \Theta\}$: 
$$
\Rfra_{n}(\Theta) =\EE_{\bar \Dcal,\xi} \left[\sup_{\theta \in \Theta} \frac{1}{n}\sum_{i=1}^n  \xi_{i}\ell(f(\bar x_{i};\theta), \bar y_{i}) \right],
$$
where $\overline\Dcal=((\bar x_i, \bar y_i))_{i=1}^n$, and $\xi_{1},\dots,\xi_{n}$ are independent uniform random variables taking values in $\{-1,1\}$ (i.e., Rademacher variables). Given a tuple $(\ell, f, \Theta,\Xcal,\Ycal)$, define $M$ as the least upper bound on the difference of individual loss values:      $|\ell(f(x;\theta),y)-\ell(f(x';\theta),y')| \le M $ for all $\theta \in\Theta $ and all  $(x,y),(x',y') \in \Xcal \times \Ycal$. For example, $M=1$ if $\ell$ is the  0-1 loss function.       
Theorem \ref{thm:ge_1} presents a generalization bound for ordered SGD:

\begin{theorem} \label{thm:ge_1}
Let $\Theta$ be a fixed subset of $\RR^{d_\theta}$. Then, for any $\delta>0$, with probability at least $1-\delta$ over an iid draw of $n$  examples $\mathcal{D}=((x_i,y_i))_{i=1}^n$, the following holds for all $\theta \in \Theta$:
\begin{align} \label{eq:ge_1}
&\EE_{(x,y)}[\ell(f(x;\theta),y)]
\\ \nonumber & \le L_{q}(\theta;\Dcal) + 2\Rfra_{n}(\Theta) + \frac{Ms}{q} \sqrt{\frac{\ln(1/\delta)}{2n}} - \Qcal_{n}(\Theta;s,q),
\end{align}
where 
$
\Qcal_{n}(\Theta;s,q) := \EE_{\bar \Dcal} [\inf_{\theta \in \Theta} \sum_{i=1}^n (\frac{r_i(\theta;\bar \Dcal)}{q}-\frac{1}{n})\ell(f(\bar x_{i};\theta), \bar y_{i}) ] \ge 0.
$
\end{theorem}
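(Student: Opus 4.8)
The plan is to reduce the statement to the classical Rademacher-complexity uniform-convergence argument, applied to a $\theta$-dependent reweighting of the samples. The starting observation is the algebraic identity $L_{q}(\theta;\Dcal)=\sum_{i=1}^n \tfrac{r_i(\theta;\Dcal)}{q}L_i(\theta;\Dcal)$, obtained by exchanging the two sums in \eqref{eq:Lq}, together with the fact that $\sum_{j=1}^n\gamma_j=q$. The latter holds because $\gamma_j$ is exactly the probability that the rank-$j$ sample lands in the top-$q$ set $Q$, and $|Q|=q$ deterministically, so $\sum_j\gamma_j=\EE|Q|=q$ (it can also be checked directly from \eqref{eq:gamma}). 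Consequently the weights $\tfrac{r_i(\theta;\Dcal)}{q}$ are nonnegative, lie in $[\,0,\tfrac{\gamma_1}{q}\,]=[\,0,\tfrac{s}{nq}\,]$ (using $\gamma_1=\binom{n-1}{s-1}/\binom{n}{s}=s/n$), and sum to one for every $\theta$; thus $L_{q}(\theta;\Dcal)$ is a weighted empirical average. I would then run the standard machinery on the functional
\[
\Phi(\Dcal):=\sup_{\theta\in\Theta}\bigl(\EE_{(x,y)}[\ell(f(x;\theta),y)]-L_{q}(\theta;\Dcal)\bigr),
\]
since \eqref{eq:ge_1} is exactly the statement $\Phi(\Dcal)\le 2\Rfra_{n}(\Theta)+\tfrac{Ms}{q}\sqrt{\ln(1/\delta)/(2n)}-\Qcal_{n}(\Theta;s,q)$ evaluated against the definition $\Phi(\Dcal)\ge\EE[\ell]-L_{q}(\theta;\Dcal)$ at each fixed $\theta$.

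First I would obtain concentration of $\Phi$ by McDiarmid's inequality, which needs a bounded-difference estimate under replacement of a single sample of $\Dcal$. The delicate point is that such a replacement perturbs not only one loss value but also the ordering $((1),\dots,(n))$, and hence every weight $r_i$. To control this cleanly I would use the rearrangement representation $L_{q}(\theta;\Dcal)=\tfrac1q\max_{\pi}\sum_{j=1}^n\gamma_j L_{\pi(j)}(\theta;\Dcal)$, valid because $\gamma_j$ is nonincreasing in $j$ (the monotone decay displayed for $\hat\gamma$ in Figure \ref{fig:plots_gamma}), so that pairing the sorted losses with the sorted weights maximizes the sum. Viewing $L_q$ as a maximum of linear functionals $F_\pi$ whose coefficient on any single sample is at most $\gamma_1/q=s/(nq)$, and using $|\max_\pi F_\pi-\max_\pi F'_\pi|\le\max_\pi|F_\pi-F'_\pi|$ together with the definition of $M$ as a bound on loss differences, I get that changing one sample moves $L_{q}(\theta;\Dcal)$, and therefore $\Phi$, by at most $\tfrac{Ms}{nq}$. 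McDiarmid with $n$ coordinates and this constant yields, with probability at least $1-\delta$, $\Phi(\Dcal)\le\EE_{\Dcal}[\Phi(\Dcal)]+\tfrac{Ms}{q}\sqrt{\ln(1/\delta)/(2n)}$, which is the third term of \eqref{eq:ge_1}.

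Next I would bound $\EE_{\Dcal}[\Phi]$. Using the ghost-sample identity $\EE_{(x,y)}[\ell(f(x;\theta),y)]=\EE_{\bar\Dcal}[\tfrac1n\sum_i L_i(\theta;\bar\Dcal)]$ and the decomposition
\[
\EE_{(x,y)}[\ell]-L_{q}(\theta;\Dcal)=\Bigl(\EE_{(x,y)}[\ell]-\tfrac1n\textstyle\sum_i L_i(\theta;\Dcal)\Bigr)-\sum_{i=1}^n\Bigl(\tfrac{r_i(\theta;\Dcal)}{q}-\tfrac1n\Bigr)L_i(\theta;\Dcal),
\]
I split the supremum via $\sup_\theta(A+C)\le\sup_\theta A+\sup_\theta C$. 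The expectation of the first piece is the ordinary empirical-process gap, controlled by Rademacher symmetrization to give $\EE_{\Dcal}[\sup_\theta A]\le 2\Rfra_{n}(\Theta)$. For the second piece, $\sup_\theta C=-\inf_\theta\sum_i(\tfrac{r_i}{q}-\tfrac1n)L_i$, and since $\Dcal$ and $\bar\Dcal$ are identically distributed, $\EE_{\Dcal}[\sup_\theta C]=-\EE_{\bar\Dcal}[\inf_\theta\sum_i(\tfrac{r_i(\theta;\bar\Dcal)}{q}-\tfrac1n)\ell(f(\bar x_i;\theta),\bar y_i)]=-\Qcal_{n}(\Theta;s,q)$, which is precisely the definition of $\Qcal_{n}$. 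Combining gives $\EE_\Dcal[\Phi]\le 2\Rfra_{n}(\Theta)-\Qcal_{n}(\Theta;s,q)$, and substituting into the McDiarmid bound produces \eqref{eq:ge_1}.

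Finally I would verify $\Qcal_{n}\ge0$. For any fixed $\theta$ the integrand equals $\sum_{j}(\tfrac{\gamma_j}{q}-\tfrac1n)L_{(j)}(\theta;\bar\Dcal)$, where $\tfrac{\gamma_j}{q}-\tfrac1n$ is nonincreasing in $j$ and sums to zero (again using $\sum_j\gamma_j=q$), while $L_{(j)}$ is nonincreasing in $j$ by construction; by Chebyshev's sum inequality (equivalently, comparing each $L_{(j)}$ against its value at the sign change of $\tfrac{\gamma_j}{q}-\tfrac1n$) this sum is nonnegative, hence so is its infimum over $\theta$ and the outer expectation. I expect the main obstacle to be the bounded-difference step: naively tracking the change of ordering can inflate the Lipschitz constant by a factor of $q$ or $s$, and it is the rearrangement / max-of-linear-functionals viewpoint that keeps the constant at the sharp value $\tfrac{Ms}{nq}$ required to match the stated $\tfrac{Ms}{q}\sqrt{\ln(1/\delta)/(2n)}$ term.
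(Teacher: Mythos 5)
Your proposal is correct and reaches the stated bound, and while it shares the paper's outer skeleton---McDiarmid applied to $\Phi(\Dcal)=\sup_{\theta\in\Theta}\bigl(\EE_{(x,y)}[\ell(f(x;\theta),y)]-L_{q}(\theta;\Dcal)\bigr)$, then the decomposition $\EE[\ell]-L_q=(\EE[\ell]-L)+(L-L_q)$ with symmetrization yielding $2\Rfra_{n}(\Theta)$ and the second piece yielding $-\Qcal_{n}(\Theta;s,q)$---it replaces the paper's most delicate step with a genuinely different argument. The paper obtains the bounded-difference constant $\tfrac{Ms}{nq}$ by an explicit case analysis: letting $j_1,j_2$ be the ranks of the replaced sample under $\Dcal$ and $\Dcal'$, it tracks how the sorted losses shift by one position between $j_1$ and $j_2$, telescopes to get $\sum_{j}|L_{(j)}(\theta;\Dcal)-L_{(j)}(\theta;\Dcal')|\le M$, and combines this with $\gamma_j\le s/n$ (Proposition \ref{prop:opt_2}). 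You instead represent $L_q$ via the rearrangement inequality as $\tfrac1q\max_{\pi}\sum_j\gamma_j L_{\pi(j)}$, a pointwise maximum of linear functionals each of which moves by at most $\tfrac{Ms}{nq}$ under a single-sample replacement; since a pointwise maximum is $1$-Lipschitz in the sup norm, the sharp constant follows with no bookkeeping of how the ordering shifts. This is cleaner, and you additionally prove something the paper only asserts in the theorem statement, namely $\Qcal_{n}\ge 0$, using $\sum_j\gamma_j=q$ (correct: $\sum_j\gamma_j=\EE|Q|=q$) together with the sign-change comparison of the mean-zero, nonincreasing coefficients $\tfrac{\gamma_j}{q}-\tfrac1n$ against the nonincreasing losses $L_{(j)}$.

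The one point you must shore up is the monotonicity $\gamma_1\ge\gamma_2\ge\cdots\ge\gamma_n$, on which both your rearrangement representation and your nonnegativity argument rest: pointing to the decay visible in Figure \ref{fig:plots_gamma} is not a proof. The claim is true and easy to establish, e.g., since $\gamma_j=P((j)\in Q)$, for $j<j'$ the map $S\mapsto (S\setminus\{(j')\})\cup\{(j)\}$ (applied when $(j)\notin S$, identity otherwise) injects the mini-batches for which $(j')$ enters the top-$q$ into those for which $(j)$ does, because $A_j\subset A_{j'}$; hence $\gamma_{j'}\le\gamma_j$. With that lemma inserted, your argument is complete. It is worth noting that the paper's telescoping route needs only the weaker, monotonicity-free bound $\gamma_j\le s/n$ of Proposition \ref{prop:opt_2}, which is what its case analysis buys; your route buys brevity and a self-contained proof of $\Qcal_n\ge0$ at the cost of this extra (but easily verified) structural fact about $\gamma_j$.
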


The expected error $\EE_{(x,y)}[\ell(f(x;\theta),y)]$ in the left-hand side of Equation \eqref{eq:ge_1} is a standard objective for generalization, whereas the right-hand side is an upper bound with the dependence on the algorithm parameters $q$ and $s$. Let us first look at the asymptotic case when  $n\rightarrow\infty$. Let $\Theta$ be constrained such that $\Rfra_n(\Theta)\rightarrow 0 $ as $n\rightarrow \infty$, which has been shown to be satisfied for various models and sets $\Theta$ \citep{bartlett2002rademacher,mohri2012foundations,bartlett2017spectrally,kawaguchi2017generalization}. With  $s/q$ being bounded, the third term in the right-hand side of Equation \eqref{eq:ge_1} disappear as $n\rightarrow\infty$. Thus, it holds with high probability that  $\EE_{(x,y)}[\ell(f(x;\theta),y)] \le L_{q}(\theta;\Dcal)-\Qcal_{n}(\Theta;s,q)\le L_{q}(\theta;\Dcal) $, where $L_{q}(\theta;\Dcal)$ is minimized by ordered SGD as shown in Theorem \ref{thm:equiv} and Theorem \ref{cor:opt}. From this viewpoint, ordered SGD minimizes the expected error for generalization when $n\rightarrow\infty$.

A special case of Theorem \ref{thm:ge_1} recovers the standard generalization bound of the empirical average loss (e.g., \citealp{mohri2012foundations}), That is, if $q=s$, ordered SGD becomes the standard mini-batch SGD and Equation \eqref{eq:ge_1}  becomes
\begin{align} \label{eq:ge_standard}
\EE_{(x,y)}[\ell(f(x;\theta),y)] \le L(\theta;\Dcal) + 2\Rfra_{n}(\Theta) + M \sqrt{\frac{\ln\frac{1}{\delta}}{2n}},
\end{align} 
which is the standard generalization bound  (e.g., \citealp{mohri2012foundations}). This is because if $q=s$, then  $\frac{r_i(\theta;\bar \Dcal)}{q}=\frac{1}{n}$ and hence $\Qcal_{n}(\Theta;s,q)=0$.  

For the purpose of a simple comparison of ordered SGD and (mini-batch) SGD, consider the case where we fix a single subset $\Theta \subseteq \RR^{d_\theta}$. Let $\hat \theta_{q}$ and $\hat \theta_s$ be the parameter vectors obtained by ordered SGD and (mini-batch) SGD respectively as the results of training.  Then, when  $n\rightarrow\infty$, with  $s/q$ being bounded, the upper bound on the expected error for ordered SGD (the right hand-side of Equation \ref{eq:ge_1}) is (strictly) less than that  for (mini-batch) SGD (the right hand-side of Equation \ref{eq:ge_standard}) if $\Qcal_{n}(\Theta;s,q) +L(\hat \theta_s;\Dcal) -L_{q}(\hat \theta_q;\Dcal) > 0$ or if  $L(\hat \theta_s;\Dcal) -L_{q}(\hat \theta_q;\Dcal)>0$.

For a given model  $f$, whether Theorem \ref{thm:ge_1} provides a non-vacuous bound depends on the choice of $\Theta$. In Appendix \ref{sec:app:discuss}, we discuss this effect as well as a standard way  to derive various data-dependent bounds from Theorem \ref{thm:ge_1}.

\vspace{-0.19cm}
\section{Experiments}\label{sec:nume}
\vspace{-0.19cm}

\begin{table*}[t!]
\centering \renewcommand{\arraystretch}{0.5} \fontsize{9.5pt}{9.5pt}\selectfont
\caption{Test errors (\%) of mini-batch SGD and ordered SGD (OSGD). The  last column labeled ``Improve'' shows  relative improvements (\%) from mini-batch SGD to ordered SGD. In the other columns, the numbers indicate the mean test errors (and standard deviations in parentheses) over ten random trials. The first column shows `No' for  no  data augmentation, and `Yes' for data augmentation.} \label{tbl:test_error} 
\begin{tabular}{lccccc}
\toprule
Data Aug & Datasets & Model &   mini-batch SGD & OSGD & Improve\\
\midrule
No & Semeion  & Logistic model & 10.76 (0.35) & 9.31 (0.42) & 13.48 \\
\midrule
No & MNIST    & Logistic model & 7.70 (0.06)  & 7.35 (0.04)  & 4.55 \\
\midrule
No & Semeion  & SVM & 11.05  (0.72) & 10.25 (0.51) & 7.18 \\
\midrule
No & MNIST    & SVM & 8.04 (0.05)  & 7.66 (0.07)  & 4.60 \\
\midrule
No & Semeion  & LeNet & 8.06 (0.61) & 6.09 (0.55) & 24.48 \\
\midrule
No & MNIST    & LeNet & 0.65 (0.04)  & 0.57 (0.06)  & 11.56 \\
\midrule
No & KMNIST    & LeNet & 3.74 (0.08) & 3.09 (0.14) & 17.49 \\
\midrule
No & Fashion-MNIST    & LeNet & 8.07 (0.16) & 8.03 (0.26) & 0.57 \\
\midrule
No & CIFAR-10  & PreActResNet18 & 13.75 (0.22) & 12.87 (0.32) & 6.41 \\
\midrule
No & CIFAR-100  & PreActResNet18 & 41.80 (0.40) & 41.32 (0.43) & 1.17 \\
\midrule
No & SVHN      & PreActResNet18 & 4.66 (0.10) & 4.39 (0.11) & 5.95 \\
\midrule
Yes & Semeion  & LeNet & 7.47 (1.03) & 5.06 (0.69) & 32.28 \\
\midrule
Yes & MNIST    & LeNet & 0.43 (0.03)  & 0.39 (0.03)  & 9.84 \\
\midrule
Yes &KMNIST    & LeNet & 2.59 (0.09) & 2.01 (0.13) & 22.33 \\
\midrule
Yes &Fashion-MNIST    & LeNet & 7.45 (0.07) & 6.49 (0.19) & 12.93 \\
\midrule
Yes &CIFAR-10  & PreActResNet18 & 8.08 (0.17) & 7.04 (0.12) & 12.81 \\
\midrule
Yes &CIFAR-100  & PreActResNet18 & 29.95 (0.31) & 28.31 (0.41) & 5.49 \\
\midrule
Yes &SVHN      & PreActResNet18 & 4.45 (0.07) & 4.00 (0.08) & 10.08 \\
\bottomrule
\end{tabular} 
\end{table*} 

\begin{figure*}[t!]
\begin{subfigure}[b]{0.0112\textwidth}
  \includegraphics[scale=0.16]{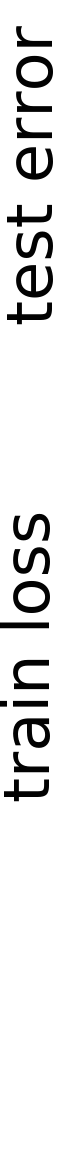}
\end{subfigure}
\begin{subfigure}[b]{0.2424\textwidth}
  \includegraphics[width=\textwidth]{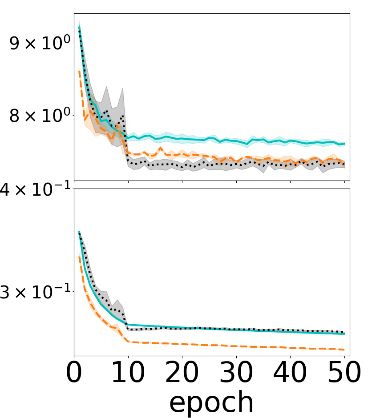}
  \caption{MNIST \& Logistic} \label{fig:plots_sgd:minist_linear_0}  
\end{subfigure}
\begin{subfigure}[b]{0.2424\textwidth}
  \includegraphics[width=\textwidth]{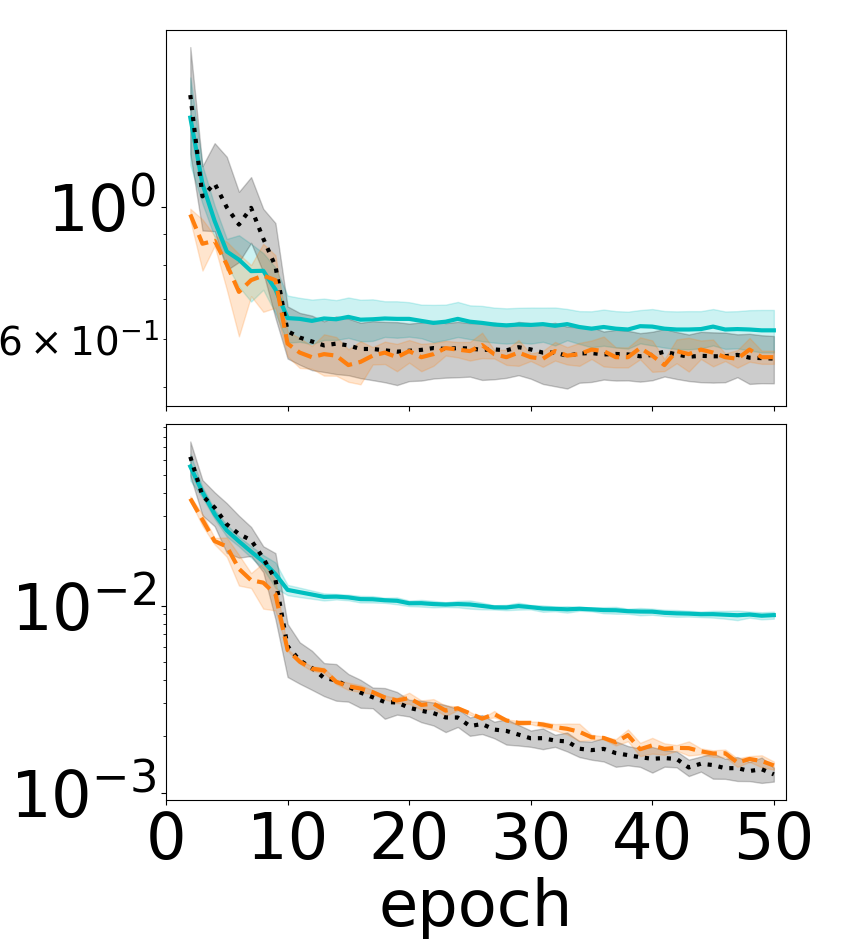}
  \caption{MNIST  \& LeNet} 
\end{subfigure}
\begin{subfigure}[b]{0.2424\textwidth}
  \includegraphics[width=\textwidth]{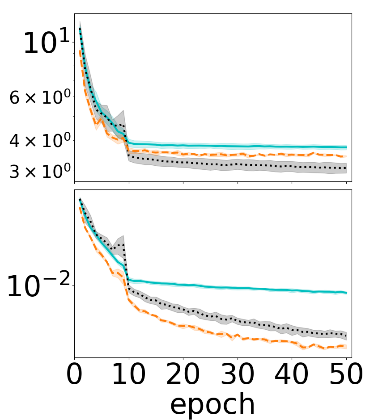}
  \caption{KMNIST} 
\end{subfigure}
\begin{subfigure}[b]{0.2424\textwidth}
  \includegraphics[width=\textwidth]{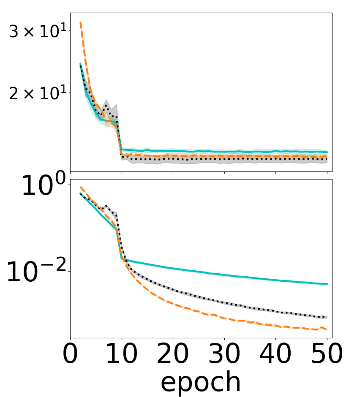}\llap{\shortstack{\includegraphics[scale=0.16]{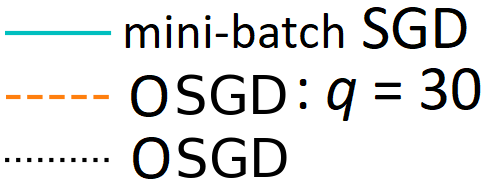}\\
        \rule{0ex}{1.38in}}\rule{0.14in}{0ex}}
  \caption{CIFAR-10} \label{fig:plots_sgd:cifar10_0}
\end{subfigure}
\begin{subfigure}[b]{0.0112\textwidth}
  \includegraphics[scale=0.16]{fig/labels/y_label}
\end{subfigure}
\begin{subfigure}[b]{0.2424\textwidth}
  \includegraphics[width=\textwidth]{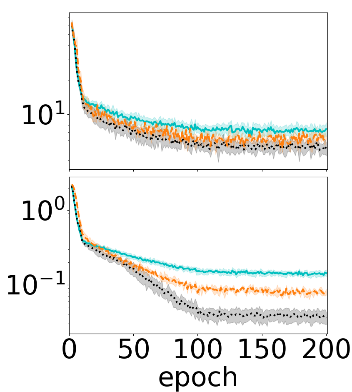}
  \caption{Semeion  \& LeNet} 
\end{subfigure}
\begin{subfigure}[b]{0.2424\textwidth}
  \includegraphics[width=\textwidth]{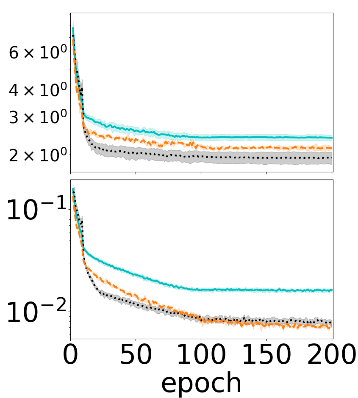}
  \caption{KMNIST}   
\end{subfigure}
\begin{subfigure}[b]{0.2424\textwidth}
  \includegraphics[width=\textwidth]{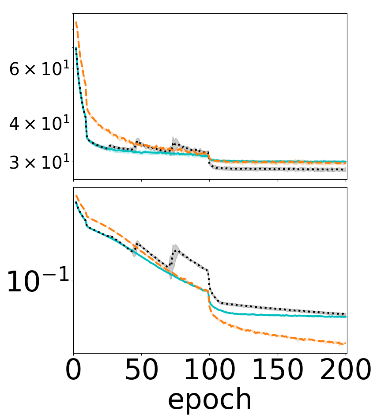}
  \caption{CIFAR-100} \label{fig:plots_sgd:cifar100_1}   
\end{subfigure}
\begin{subfigure}[b]{0.2424\textwidth}
  \includegraphics[width=\textwidth]{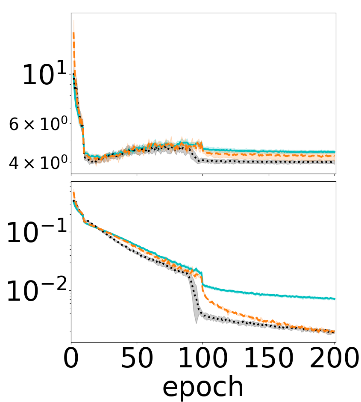}
  \caption{SVHN} 
\end{subfigure}
\caption{Test error and training loss (in log scales) versus the number of epoch. These are without data augmentation in subfigures (a)-(d), and with data augmentation in subfigures (e)-(h). The lines indicate the mean values over 10 random trials, and the shaded regions represent  intervals of the sample standard deviations. }
\label{fig:plots_sgd}
\end{figure*}

In this section, we empirically evaluate ordered SGD with various datasets, models and settings. To avoid an extra freedom due to the hyper-parameter $q$, we introduce a single fixed setup of the adaptive values of  $q$  as the default setting:    $q=s$ at the beginning of training, $q=\floor{s/2}$ once $\mathrm{train\_acc}\ge 80\%$, $q=\floor{s/4}$ once $\mathrm{train\_acc}\ge 90\%$, $q=\floor{s/8}$ once $\mathrm{train\_acc}\ge 95\%$, and $q=\floor{s/16}$ once $\mathrm{train\_acc}\ge 99.5\%$, where $\mathrm{train\_acc}$ represents  training accuracy. The value of $q$ was automatically updated at the end of each epoch based on this simple rule. 
This rule was derived based on the intuition that in the early stage of training, all samples are informative to build a rough model, while the samples around the boundary (with larger losses) are more helpful to build the final classifier in later stage. In the figures and tables of this section, we refer to ordered SGD with this rule as `OSGD', and  ordered SGD with a fixed value $q=\bar q$  as `OSGD: $q=\bar q$'.

{\bf Experiment with fixed hyper-parameters.} For this experiment, we fixed all hyper-parameters a priori across all different datasets and models by using a standard hyper-parameter setting of mini-batch SGD, instead of aiming for  state-of-the-art test errors for each dataset with a possible issue of over-fitting to  test and validation datasets \citep{dwork2015reusable,rao2008dangers}. We fixed the mini-batch size $s$ to be 64, the weight decay rate to be $10^{-4}$,  the initial learning rate to be $0.01$, and the momentum coefficient to be $0.9$. See Appendix \ref{sec:app:exp} for more details of the experimental settings. The code to reproduce all the results is publicly available at: [the link is
hidden for anonymous submission].     

Table \ref{tbl:test_error} compares the testing performance of ordered SGD and mini-batch SGD for different models and datasets. Table \ref{tbl:test_error} consistently shows that ordered SGD improved mini-batch SGD in test errors.
The table reports the mean and the standard deviation of test errors (i.e., 100 $\times$ the average of 0-1 losses on test dataset)  over $10$ random experiments with different random seeds. The table also summarises the relative improvements of ordered SGD over mini-batch SGD, which is defined as [$100 \times$ ((mean test error of mini-batch SGD) - (mean test error of ordered SGD)) / (mean test error of mini-batch SGD)]. Logistic model refers to linear multinomial logistic regression model, SVM refers to linear multiclass support vector machine,   LeNet refers to a standard variant of LeNet \citep{lecun1998gradient} with ReLU activations, and PreActResNet18 refers to pre-activation ResNet with  18 layers  \citep{he2016identity}.

Figure \ref{fig:plots_sgd} shows the test error and the average training loss of mini-batch SGD and ordered SGD versus the number of epoch.  As shown in the figure, ordered SGD with the fixed $q$ value also outperformed mini-batch SGD in general. In the figures, the reported training losses refer to the standard empirical average loss $\frac{1}{n}\sum_{i=1}^n L_i(\theta)$ measured at the end of each epoch. When compared to mini-batch SGD, ordered SGD  had  lower test errors  \textit{while  having higher training losses} in Figures \ref{fig:plots_sgd:minist_linear_0},   \ref{fig:plots_sgd:cifar10_0} and \ref{fig:plots_sgd:cifar100_1}, because ordered SGD optimizes over the ordered empirical loss instead. This is consistent with our motivation and theory of ordered SGD in Sections \ref{sec:algo},  \ref{sec:optimization} and \ref{sec:generalization}. The qualitatively similar behaviors were also  observed  with   all of the 18 various problems as shown  in Appendix \ref{sec:app:exp}.    

\begin{table}[t!]
\centering 
\captionof{table}{Average wall-clock time (seconds) per epoch with data augmentation. PreActResNet18 was used for CIFAR-10, CIFAR-100, and SVHN, while LeNet was used for MNIST and KMNIST.}  
 \label{tbl:wall_clock_time_partial}
\renewcommand{\arraystretch}{0.5} \fontsize{10pt}{10pt}\selectfont 
\vspace{-2pt} 
\begin{tabular}{lccccc}
\toprule
Datasets &  mini-batch SGD & OSGD  \\
\midrule
MNIST    &  14.44 (0.54)  & 14.77 (0.41)  \\
\midrule
KMNIST     & 12.17 (0.33) & 11.42  (0.29)  \\
\midrule
CIFAR-10  &  48.18 (0.58) & 46.40 (0.97)  \\
\midrule
CIFAR-100  &  47.37 (0.84) & 44.74 (0.91) \\
\midrule
SVHN      &  72.29 (1.23) & 67.95 (1.54)  \\
\bottomrule
\end{tabular}
\vspace{23pt}
\end{table}
\begin{figure}[t!]
\centering
\vspace{-10pt}
\includegraphics[width=0.85\columnwidth,height=0.7\columnwidth]{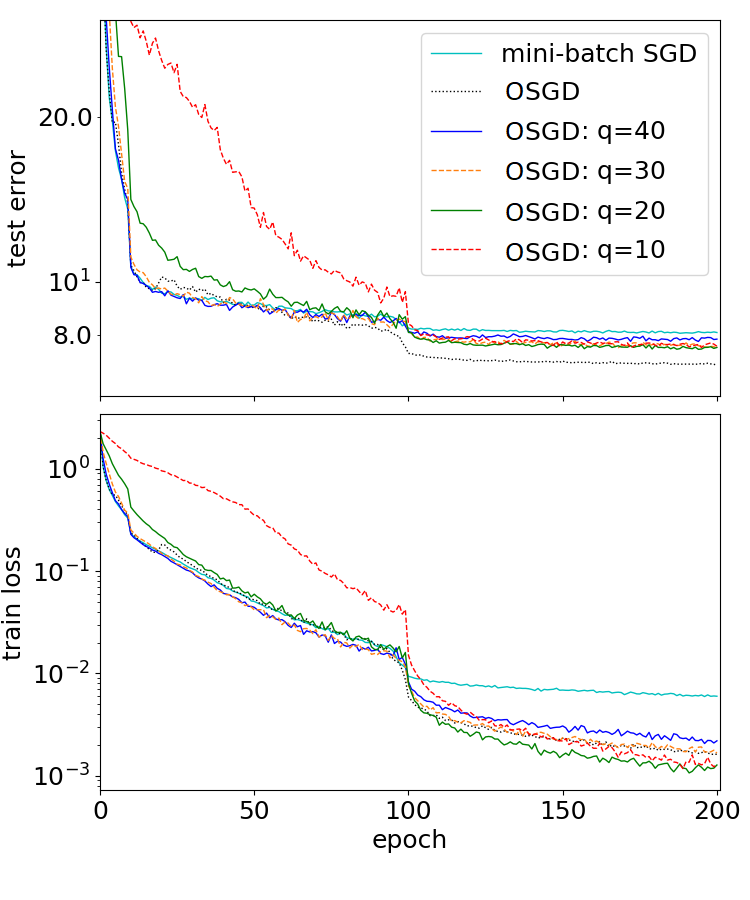}
\vspace{-0pt}
\captionof{figure}{Effect of different $q$ values with CIFAR-10.} \label{fig:plots_sgd_q_partial}
\end{figure}

Moreover, ordered SGD is a computationally efficient algorithm. Table \ref{tbl:wall_clock_time_partial} shows the wall-clock time in illustrative four experiments, whereas Table \ref{tbl:wall_clock_time} in Appendix \ref{sec:app:exp} summarizes the wall-clock time in all experiments. The wall-clock time of ordered SGD measures the time spent by all computations of ordered SGD, including the extra computation of finding  top-$q$ samples in a mini-batch (line 4 of Algorithm \ref{al:qSGD}). The extra computation is generally negligible and can be completed in  $O(s \log q )$ or $O(s)$  by using a sorting/selection algorithm.  The ordered SGD algorithm can be faster than mini-batch SGD because ordered SGD only computes the (sub-)gradient $\tg^{t}$ of the top-$q$ samples (in line 5 of Algorithm \ref{al:qSGD}).  As shown in Tables \ref{tbl:wall_clock_time_partial} and  \ref{tbl:wall_clock_time}, ordered SGD was faster than mini-batch SGD for all larger models with PreActResNet18. This is because  the computational reduction of the back-propagation in ordered SGD can dominate the small extra cost of finding  top-$q$ samples  in larger problems.

{\bf Experiment with  different $q$ values.} Figure \ref{fig:plots_sgd_q_partial} shows the effect of different fixed $q$ values for CIFAR-10  with PreActResNet18.
Ordered SGD improved the test errors of mini-batch SGD with different fixed   $q$ values. We also report the same observation with different datasets and models in Appendix \ref{sec:app:exp}. 

{\bf Experiment with different learning rates and mini-batch sizes.} Figures \ref{fig:new1} and  \ref{fig:new2} in  Appendix \ref{sec:app:exp}  consistently show the improvement of ordered SGD over mini-batch SGD with different  different learning rates and mini-batch sizes.

{\bf Experiment with the best learning rate, mixup, and random erasing.} Table \ref{tbl:other_data_aug} summarises the experimental results with the data augmentation methods of random erasing (RE) \citep{zhong2017random}  and mixup  \citep{zhang2017mixup,verma2019manifold} by using CIFAR-10 dataset.
For this experiment, we  purposefully adopted the  setting that  favors mini-batch SGD. That is, for both mini-batch SGD and ordered SGD, we used  hyper-parameters  tuned for mini-batch SGD. For RE and mixup data, we used the  same tuned hyper-parameter settings (including learning rates) and the codes as those  in the previous studies that used  mini-batch SGD   \citep{zhong2017random,verma2019manifold} (with WRN-28-10 for RE and with PreActResNet18 for mixup). For standard
data augmentation,
we 
first searched the best learning rate of mini-batch SGD based on  the test error (purposefully overfitting to the test dataset for mini-batch\ SGD) by using the grid search with learning rates of 1.0, 0.5, 0.1, 0.05. 0.01, 0.005, 0.001, 0.0005, 0.0001. Then, we used the best learning rate of mini-batch SGD
for
ordered SGD (instead of using the best learning rate of ordered SGD for ordered SGD).
As shown in Table \ref{tbl:other_data_aug},  ordered SGD with hyper-parameters tuned for mini-batch SGD  still outperformed fine-tuned mini-batch SGD with the different data augmentation methods.

\begin{table}[t!]
\centering 
\renewcommand{\arraystretch}{0.5} \fontsize{10pt}{10pt}\selectfont
\captionof{table}{Test errors (\%)  by  using the best learning rate of mini-batch SGD   with   various data augmentation methods  for CIFAR-10.} \label{tbl:other_data_aug}
\vspace{-8pt} 
\begin{tabular}{lcccc}
\toprule
Data Aug & mini-batch SGD & OSGD & Improve\\
\midrule
Standard   & 6.94 & 6.46 & 6.92 \\
\midrule
RE   & 3.24 & 3.06 & 5.56 \\
\midrule
Mixup    &  3.31  & 3.05  & 7.85 \\
\bottomrule
\end{tabular} 
\end{table}

\vspace{-0.2cm}
\section{Related work and extension}
\vspace{-0.2cm}
Although there is no direct predecessor of our work, 
 the following  fields are  related to this paper.

{\bf Other mini-batch stochastic methods}. The proposed sampling strategy and our theoretical analyses are generic and can be extended to other (mini-batch) stochastic methods, including Adam \citep{kingma2014adam}, stochastic mirror descent  \citep{beck2003mirror,nedic2014stochastic,lu2017relative,lu2018accelerating,zhang2018convergence}, and proximal stochastic subgradient methods \citep{davis2018stochastic}.  Thus, our results open up the research direction for further studying the proposed stochastic optimization framework with different base algorithms such as Adam and  AdaGrad. To illustrate it, we presented ordered Adam and reported the numerical results in Appendix \ref{sec:app:exp}.

{\bf Importance Sampling SGD. }Stochastic gradient descent with importance sampling has been an active research area for the past several years \citep{needell2014stochastic,zhao2015stochastic,alain2015variance,loshchilov2015online,gopal2016adaptive,katharopoulos2018not}. In the convex setting, \citep{zhao2015stochastic, needell2014stochastic} show that the optimal sampling distribution for minimizing $L(\theta)$ is proportional to the per-sample gradient norm. However, maintaining the norm of gradient for individual samples can be computationally expensive when the dataset size $n$ or the parameter vector size $d_{\theta}$  is large in particular for many applications of deep learning. These importance sampling methods are inherently different from ordered SGD in that importance sampling is used to reduce the number of iterations for minimizing $L(\theta)$, whereas ordered SGD is designed to learn a different type of models by minimizing the new objective function $L_q(\theta)$.

{\bf Average Top-k Loss. }The average top-$k$ loss is introduced by \citet{fan2017learning} as an alternative to the empirical average loss $L(\theta)$. The ordered loss function $L_q(\theta)$  differs from the  average top-$k$ loss as shown in Section   \ref{sec:optimization}. Furthermore, our proposed framework is fundamentally different from the average top-$k$ loss. First, the  algorithms are  different -- the stochastic method proposed in \citet{fan2017learning} utilizes duality of the  function and is unusable  for deep neural networks (and other non-convex problems), while our proposed method is a modification of mini-batch SGD that is usable for deep neural networks (and other non-convex problems) and scales well for large problems.  Second, the optimization results are different, and in particular, the objective functions are different and we have convergence analysis for weakly convex (non-convex) functions.
Finally, the focus of generalization property is different --  \citet{fan2017learning} focuses on the calibration for binary classification problem, while we focus on the generalization bound that works for general classification and regression problems.

{\bf Random-then-Greedy Procedure. \ }Ordered SGD randomly picks a subset of samples and then greedily utilizes a part of the subset, which is related to the random-then-greedy procedure proposed recently in the different topic -- the greedy weak learner for gradient boosting \citep{lu2018randomized}.

\vspace{-0.2cm}
\section{Conclusion}
\vspace{-0.2cm}
We have presented an efficient stochastic first-order method, ordered SGD, for learning an effective predictor in machine learning problems. We have shown that ordered SGD minimizes a new ordered empirical loss $L_q(\theta)$, based on which we have developed the optimization and generalization properties of ordered SGD. The numerical experiments confirmed the effectiveness of our proposed algorithm.


\bibliography{all}

\clearpage
\onecolumn

\appendix

\begin{center}
\textbf{\LARGE 
 Appendix
\vspace{5pt}}
\end{center}

\section{Proofs} \label{sec:app:proof}
In Appendix \ref{sec:app:proof}, we provide complete proofs of the theoretical results.   
\subsection{Proof of Theorem \ref{thm:equiv}}
\begin{proof}
We just need to show that $\tg$ is an unbiased estimator of a sub-gradient of $L_q(\theta)$ at $\theta^t$, namely $\EE \tg \in \partial L_q(\theta^t)$.

At first, it holds that
$$\EE \tg^t = \frac{1}{q} \EE \sum_{i\in Q} g_i^t + g_R^t = \frac{1}{q} \sum_{i=1}^n P(i\in Q) g_i^t + g_R^t= \frac{1}{q} \sum_{j=1}^n P((j)\in Q) g_{(j)}^t + g_R^t\ ,$$
where $g_i^t\in \partial L_i(\theta^t)$ is a sub-gradient of $L_i$ at $\theta^t$ and $g_R^t\in \partial R(\theta^t)$.
In the above equality chain, the third equality is simply the definition of expectation, and the last equality is because $((1),(2),\ldots, (n))$ is a permutation of $(1,2,\ldots, n)$. 

For any given index $j$, define $A_j=((1), (2), \ldots, (j-1))$, then
\begin{equation}\label{eq:prob-0}
    \begin{array}{ll}
        P((j)\in Q)     & = P\left( (j)\in \qargmax_{i\in S} L_i(\theta)\right)  \\
         & =  P\left( (j)\in S \text{ and } S \text{ contains at most } q-1 \text{ items in } A_j\right) \\
         & = P\left( (j)\in S \right)  P\left(S \text{ contains at most } q-1 \text{ items in } A_j | (j)\in S \right) \\
         & = P\left( (j)\in S \right) \sum_{l=0}^{q-1}  P\left(S \text{ contains } l \text{ items in } A_j | (j)\in S \right) .\\
    \end{array}
\end{equation}
Notice that $S$ is randomly chosen from sample index set $(1,2,\ldots,n)$ without replacement. There are in total $\binom{n}{s}$ different sets $S$ such that $|S|=s$. Among them, there are $\binom{n-1}{s-1}$ different sets $S$ which contains the index $(j)$, thus 
\begin{equation}\label{eq:prob-1}
    P\left( (j)\in S \right)= \frac{\binom{n-1}{s-1}}{\binom{n}{s}} \ .
\end{equation}
Given the condition $(j)\in S$, $S$ contains $l$ items in $A_j$ means $S$ contains $s-l-1$ items in $\{(j+1), (j+2) \ldots, (n)\}$, thus there are $\binom{j-1}{l}\binom{n-j}{s-l-1}$ such possible set $S$, whereby it holds that
\begin{equation}\label{eq:prob-2}
   P\left(S \text{ contains } l \text{ items in } A_j | (j)\in S \right) = \frac{\binom{j-1}{l}\binom{n-j}{s-l-1}}{\binom{n-1}{s-1}} \ . 
\end{equation}
Substituting Equations \eqref{eq:prob-1} and \eqref{eq:prob-2} into Equation \eqref{eq:prob-0}, we arrive at
\begin{equation*}
   P((j)\in T) = \frac{\binom{n-1}{s-1}}{\binom{n}{s}} \sum_{l=0}^{q-1} \frac{\binom{j-1}{l}\binom{n-j}{s-l-1}}{\binom{n-1}{s-1}} = \frac{\sum_{l=0}^{q-1}\binom{j-1}{l}\binom{n-j}{s-l-1}}{\binom{n}{s}} = \gamma_j \ .
\end{equation*}
Therefore,
$$
\EE \tg^t = \frac{1}{q} \sum_{j=1}^n P((j)\in Q) g^t_{(j)}  + g_R^t = \frac{1}{q} \sum_{j=1}^n \gamma_j g^t_{(j)}  + g_R^t \in \partial L_q(\theta^t) \ ,
$$
where the last inequality is due to the aditivity of sub-gradient (for both convex and weakly convex function)
\end{proof}

\subsection{Proof of Proposition \ref{prop:limit}}
We just need to show that 
\begin{equation}\label{eq:asymp_gamma}
    \lim_{j, n\rightarrow \infty, j/n=z} \gamma_j = \sum_{l=0}^{q-1}\frac{1}{n}\left(\frac{j}{n}\right)^{l}\left(\frac{n-j}{n}\right)^{s-l-1}\frac{s!}{l!(s-l-1)!} \ ,
\end{equation}
then we finish the proof by changing variable $z=\frac{j}{n}$. 

At first, the Stirling's approximation yields that when $n$ and $j$ are both sufficiently large, it holds that
\begin{equation}\label{eq:stirling}
\binom{n}{j} \sim \sqrt{\frac{n}{2\pi j (n-j)}} \frac{n^n}{j^j(n-j)^{n-j}} \ .    
\end{equation}

Thus,
\begin{equation}\label{eq:prop1_mid1}
    \lim_{j,n\rightarrow \infty, j/n=z}\frac{\binom{n-s}{j-1-l}}{\binom{n-1}{j-1}} = \frac{\frac{n^{n-s}}{j^{j-1-l}(n-j)^{n-j-s+1+l}}}{\frac{n^{n-1}}{j^{j-1}(n-j)^{n-j}}} = \frac{j^l(n-j)^{s-l-1}}{n^{s-1}} = \left(\frac{j}{n}\right)^{l} \left(\frac{n-j}{n}\right)^{s-l-1} ,
\end{equation}
where the first equality utilize Equation \eqref{eq:stirling} and the fact that $s, l, 1$ are negligible in the limit case (except the exponent terms).

On the other hand, it holds by rearranging the factorial numbers that
\begin{equation}\label{eq:prop1_mid2}
    \frac{1}{n}\frac{\binom{n-s}{j-1-l}}{\binom{n-1}{j-1}}\frac{s!}{l!(s-l-1)!} = \frac{\binom{j-1}{l}\binom{n-j}{s-l-1}}{\binom{n}{s}} \ .
\end{equation}

Combining Equations \eqref{eq:prop1_mid1} and \eqref{eq:prop1_mid2} and summing $l$, we arrive at Equation \eqref{eq:asymp_gamma}.



By noticing $s>q$, it holds that
\begin{align*}
\frac{d}{dz}\gamma(z) & =\sum_{l=1}^{q-1}lz^{l-1}(1-z)^{s-l-1}\frac{s!}{l!(s-l-1)!}-\sum_{l=0}^{q-1}(s-l-1)z^{l}(1-z)^{s-l-2}\frac{s!}{l!(s-l-1)!}\\
 & =\sum_{l=1}^{q-1}z^{l-1}(1-z)^{s-l-1}\frac{s!}{(l-1)!(s-l-1)!}-\sum_{l=0}^{q-1}z^{l}(1-z)^{s-l-2}\frac{s!}{l!(s-l-2)!}\\
 & =\sum_{l=0}^{q-2}z^{l}(1-z)^{s-l-2}\frac{s!}{l!(s-l-2)!}-\sum_{l=0}^{q-1}z^{l}(1-z)^{s-l-2}\frac{s!}{l!(s-l-2)!}\\
 & =-z^{q-1}(1-z)^{s-q-1}\frac{s!}{l!(s-l-2)!}\\
 & \propto-z^{q-1}(1-z)^{s-q-1}.
\end{align*}
In other word, $1-\frac{1}{s}\gamma(z)$ is the cumulative of Beta($q,s-q$)
when $n\rightarrow\infty$.

\subsection{Proof of Theorem \ref{cor:opt}}
\begin{proof}
Notice that $\tg^t$ is a sub-gradient of $L_Q(\theta^t)$ where $L_{Q}(\theta^t)= \frac{1}{q} \sum_{i\in Q}  L_i(\theta^t) +  R(\theta^{t})$. Suppose $\tg^t=\frac{1}{q} \sum_{i\in Q} g_i(\theta^t)+g_R(\theta^t)$ where $g_i(\theta^t)$ is a sub-gradient of $L_i(\theta^t)$ and $g_R(\theta^t)$ is a sub-gradient of $R(\theta^t)$. Then
\begin{equation} \label{eq:cor_sub}
    \|\tg^t\|^2 = \left\|\frac{1}{q} \sum_{i\in Q} g_i(\theta^t)+g_R(\theta^t)\right\|^2\le  2\left(\left\|\frac{1}{q} \sum_{i\in Q} g_i(\theta^t)\right\|^2+\left\|g_R(\theta^t)\right\|^2\right)\le 2(G_1^2+G_2^2) \ .
\end{equation}

Meanwhile, it follows Theorem \ref{thm:equiv} that $\tg^t$ is an unbiased estimator of a sub-gradient of $L_q(\theta^t)$. Together with  Equation  \eqref{eq:cor_sub}, we obtain the statement (1) by the analysis of convex stochastic sub-gradient descent in \cite{boyd2008stochastic}.

Furthermore, suppose $L_i(\theta)+\frac{\rho}{2}\|\theta\|^2$ is convex for any $i$, then $L_q(\theta)+\frac{\rho}{2}\|\theta\|^2=\frac{1}{q}\sum_{j=1}^n\gamma_j \left(L_{(j)}(\theta)+\frac{\rho}{2}\|\theta\|^2\right)+R(\theta)$ is also convex, whereby $L_q(\theta)$ is $\rho$-weakly convex. We obtain the statement (2) by substituting into Theorem 2.1 in \cite{davis2018stochastic}. 
\end{proof}

\subsection{Proof of Theorem \ref{thm:ge_1}}

Before proving Theorem \ref{thm:ge_1}, we first show the following proposition, which gives an upper bound for $\gamma_j$:
\begin{proposition} \label{prop:opt_2}
For any $j \in \{1,\dots,n\}$, $\gamma_{j} \le\frac{s}{n}$. 
\end{proposition}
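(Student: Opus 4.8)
The plan is to exploit the probabilistic interpretation of $\gamma_j$ already established in the proof of Theorem \ref{thm:equiv}, where it was shown that $\gamma_j = P((j)\in Q)$, the probability that the $j$-th largest-loss sample is selected into the top-$q$ set $Q$. Since $Q$ is by construction a subset of the drawn mini-batch $S$ (line 4 of Algorithm \ref{al:qSGD}), the event $\{(j)\in Q\}$ is contained in the event $\{(j)\in S\}$, so monotonicity of probability immediately gives $\gamma_j = P((j)\in Q) \le P((j)\in S)$.

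It then remains only to evaluate $P((j)\in S)$, which was already computed in Equation \eqref{eq:prob-1} of the proof of Theorem \ref{thm:equiv}: a uniformly random $s$-subset of $\{1,\dots,n\}$ contains any fixed index with probability $\binom{n-1}{s-1}/\binom{n}{s} = s/n$. Combining the two facts yields $\gamma_j \le s/n$, completing the argument.

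Alternatively, I would give a self-contained combinatorial proof that avoids invoking the probabilistic reading. Here the key step is Vandermonde's identity: since every summand $\binom{j-1}{l}\binom{n-j}{s-l-1}$ is nonnegative and $q \le s$, the numerator of $\gamma_j$ is a partial sum bounded above by the full sum $\sum_{l=0}^{s-1}\binom{j-1}{l}\binom{n-j}{s-l-1} = \binom{n-1}{s-1}$. Dividing by $\binom{n}{s}$ and simplifying the resulting ratio to $s/n$ gives the claim.

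There is no substantive obstacle in this proposition; it is a one-line consequence of $Q\subseteq S$ together with the probability computation already in hand. The only thing requiring a moment's thought is recognizing the correct upper-bounding device: in the probabilistic route, seeing that the containment $Q\subseteq S$ is exactly what makes the bound hold (the top-$q$ selection can only decrease the selection probability relative to simply being in the batch), and in the combinatorial route, recognizing that the truncated sum over $l \in \{0,\dots,q-1\}$ is dominated by the Vandermonde-summable sum over $l \in \{0,\dots,s-1\}$. I would present the probabilistic version as the primary proof since it reuses machinery already developed and makes the bound conceptually transparent.
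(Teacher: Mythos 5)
Your primary (probabilistic) proof is exactly the paper's own argument: $\gamma_j$ equals the probability that the $j$-th sample in loss order is selected into $Q$, which, since $Q\subseteq S$, is at most the probability $P((j)\in S)=s/n$ of being selected into the mini-batch. Your alternative combinatorial route via Vandermonde's identity is also correct and self-contained, but the argument you designate as primary coincides with the paper's proof.
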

\begin{proof}
The value of $\gamma_{j}$ is equal to the probability of ordered SGD choosing the $j$-th sample in the ordered sequence $(L_{(1)}(\theta;\Dcal),\dots,L_{(n)}(\theta;\Dcal))$, which is at most the probability of mini-batch SGD choosing the $j$-th sample. The probability of mini-batch SGD choosing the $j$-th sample is $\frac{s}{n}$.
\end{proof}

We are now ready to prove Theorem \ref{thm:ge_1} by finding an upper  bound on $\sup_{\theta \in \Theta} \EE_{(x,y)}[\ell(f(x;\theta),y_{})]-L_{q}(\theta;\mathcal{D})$ based on  McDiarmid's inequality.  

\emph{Proof of Theorem \ref{thm:ge_1}}.\ \
  Define $\Phi(\Dcal)= \sup_{\theta \in \Theta} \EE_{(x,y)}[\ell(f(x;\theta),y_{})]-L_{q}(\theta;\mathcal{D})$. In this proof, our objective is to provide the upper bound on $\Phi(\Dcal)$ by using McDiarmid's inequality. To apply McDiarmid's inequality to $\Phi(\Dcal)$, we first show that $\Phi(\Dcal)$ satisfies the remaining condition of McDiarmid's inequality. Let $\Dcal$ and $\Dcal'$ be two datasets differing by exactly one point of an arbitrary index $i_{0}$; i.e.,  $\Dcal_i= \Dcal'_i$ for all $i\neq i_{0}$ and $\Dcal_{i_{0}} \neq \Dcal'_{i_{0}}$. Then, we provide an upper bound on $\Phi(\Dcal') - \Phi(\Dcal)$ as follows:  
\begin{align*}
\Phi(\Dcal') - \Phi(\Dcal)
& \le \sup_{\theta \in \Theta}   L_{q}(\theta;\mathcal{D})-L_{q}(\theta;\mathcal{D'}).
\\ & = \sup_{\theta \in \Theta} \frac{1}{q} \sum_{j=1}^n \gamma_{j} (L_{(j)}(\theta;\Dcal)-L_{(j)}(\theta;\Dcal')) 
\\ & \le \sup_{\theta \in \Theta} \frac{1}{q} \sum_{j=1}^n |\gamma_{j}||L_{(j)}(\theta;\Dcal)-L_{(j)}(\theta;\Dcal')|
\\ & \le\sup_{\theta \in \Theta} \frac{1}{q} \frac{s}{n} \sum_{j=1}^n |L_{(j)}(\theta;\Dcal)-L_{(j)}(\theta;\Dcal')|  
\end{align*}
where the first line follows the property of the supremum, $\sup (a) - \sup (b)\le\sup(a-b)$, the second line follows the definition of $L_{q}$, and the last line follows Proposition \ref{prop:opt_2} ($|\gamma_{j}| \le\frac{s}{n} $). 

We now bound the last term $ \sum_{j=1}^n |L_{(j)}(\theta;\Dcal)-L_{(j)}(\theta;\Dcal')|$. This requires a careful examination because  $|L_{(j)}(\theta;\Dcal)-L_{(j)}(\theta;\Dcal')| \neq 0$ for more than one index $j$ (although  $\Dcal$ and $\Dcal'$  differ only by exactly one point). This is because  it is possible to have  $(j;\Dcal ) \neq (j; \Dcal')$ for many indexes $j$ where $(j;\Dcal) = (j)$ in $L_{(j)}(\theta;\Dcal)$ and $(j;\Dcal') = (j)$ in $L_{(j)}(\theta;\Dcal')$.          
To analyze this effect, we now conduct case analysis. Define $l(i_{};\Dcal)$ such that $(j)=i$ where $j=l(i_{};\Dcal)$; i.e., $L_i(\theta;\Dcal)=L_{(l(i_{};\Dcal))}(\theta;\Dcal)$. 

Consider the case where $l(i_{0};\Dcal') \ge l(i_{0};\Dcal)$. Let $j_1=l(i_{0};\Dcal)$ and $j_2=l(i_{0};\Dcal') $. Then,   
\begin{align*}
\sum_{j=1}^n |L_{(j)}(\theta;\Dcal)-L_{(j)}(\theta;\Dcal')| 
& = \sum_{j=j_1}^{j_2-1} |L_{(j)}(\theta;\Dcal)-L_{(j)}(\theta;\Dcal')|+|L_{(j_{2})}(\theta;\Dcal)-L_{(j_{2})}(\theta;\Dcal')|   
\\ & =  \sum_{j=j_1}^{j_2-1} |L_{(j)}(\theta;\Dcal)-L_{(j+1)}(\theta;\Dcal)|+|L_{(j_{2})}(\theta;\Dcal)-L_{(j_{2})}(\theta;\Dcal')| \\ & =  \sum_{j=j_1}^{j_2-1} (L_{(j)}(\theta;\Dcal)-L_{(j+1)}(\theta;\Dcal))+L_{(j_{2})}(\theta;\Dcal)-L_{(j_{2})}(\theta;\Dcal')
\\ & =  L_{(j_{1})}(\theta;\Dcal)-L_{(j_{2})}(\theta;\Dcal')
\\ & \le M,
\end{align*}
where the first line uses the fact that $j_2 =l(i_{0};\Dcal')\ge l(i_{0};\Dcal)= j_1$ where $i_{0}$ is the index of samples differing in  $\Dcal$ and $\Dcal'$. The second line follows the equality $(j;\Dcal')=(j+1;\Dcal)$ from $j_1$ to $j_2-1$ in this case. The third line follows the definition of the ordering of the indexes. The fourth line follows the cancellations of the terms from the third line.        

Consider the case where $l(i_{0};\Dcal') < l(i_{0};\Dcal)$. Let $j_1=l(i_{0};\Dcal')$ and $j_2= l(i_{0};\Dcal)$. Then,    
 \begin{align*}
\sum_{j=1}^n |L_{(j)}(\theta;\Dcal)-L_{(j)}(\theta;\Dcal')| 
& =|L_{(j_{1})}(\theta;\Dcal)-L_{(j_{1})}(\theta;\Dcal')|+ \sum_{j=j_1+1}^{j_2} |L_{(j)}(\theta;\Dcal)-L_{(j)}(\theta;\Dcal')|   
\\ & =  |L_{(j_{1})}(\theta;\Dcal)-L_{(j_{1})}(\theta;\Dcal')|+\sum_{j=j_1+1}^{j_2} |L_{(j)}(\theta;\Dcal)-L_{(j-1)}(\theta;\Dcal)| \\ & =  L_{(j_{1})}(\theta;\Dcal)-L_{(j_{1})}(\theta;\Dcal')+\sum_{j=j_1+1}^{j_2} (L_{(j)}(\theta;\Dcal)-L_{(j-1)}(\theta;\Dcal))
\\ & =  L_{(j_{1})}(\theta;\Dcal')-L_{(j_{2})}(\theta;\Dcal)
\\ & \le M.
\end{align*}
where the first line uses the fact that $j_1 =l(i_{0};\Dcal')< l(i_{0};\Dcal)= j_2$ where $i_{0}$ is the index of samples differing in  $\Dcal$ and $\Dcal'$. The second line follows the equality $(j;\Dcal')=(j-1;\Dcal)$ from $j_1+1$ to $j_2$ in this case. The third line follows the definition of the ordering of the indexes. The fourth line follows the cancellations of the terms from the third line. 

Therefore, in both cases of $l(i_{0};\Dcal') \ge l(i_{0};\Dcal)$ and $l(i_{0};\Dcal') < l(i_{0};\Dcal)$, we have that 
$$
\Phi(\Dcal') - \Phi(\Dcal) \le  \frac{s}{q} \frac{M}{n}.
$$
Similarly,  $\Phi(\Dcal) - \Phi(\Dcal')\le  \frac{s}{q} \frac{M}{n}$, and hence $|\Phi(\Dcal) - \Phi(\Dcal')| \le  \frac{s}{q} \frac{M}{n}$. Thus, by McDiarmid's inequality, for any $\delta>0$, with probability at least $1-\delta$,
$$
\Phi(\Dcal) \le  \EE_{\bar \Dcal}[\Phi(\bar \Dcal)] + \frac{Ms}{q} \sqrt{\frac{\ln(1/\delta)}{2n}}.
$$ 
Moreover, since\begin{align*}
\sum_{i=1}^n r_i(\theta;\Dcal) L_{i}(\theta;\Dcal)=\sum_{j=1}^n  \gamma_{j}\sum_{i=1}^{n} \mathbbm{1}\{i=(j)\} L_{i}(\theta;\Dcal)
=\sum_{j=1}^n  \gamma_{j}L_{(j)}(\theta;\Dcal),
\end{align*}we have that
$$
L_{q}(\theta;\Dcal)=\frac{1}{q} \sum_{i=1}^n r_i(\theta;\Dcal) L_{i}(\theta;\Dcal)+R(\theta).
$$

Therefore,   
\begin{align*}
&\EE_{\bar \Dcal}[\Phi(\bar \Dcal)] 
\\ & =\EE_{\bar \Dcal}\left[\sup_{\theta \in \Theta} \EE_{(\bar x',\bar y')}[\ell(f(\bar x';\theta),\bar y')]-L(\theta;  \mathcal{\bar D})+L(\theta;\mathcal{\bar D})-L_{q}(\theta;\mathcal{\bar D}) \right] 
\\ & \le \EE_{\bar \Dcal}\left[\sup_{\theta \in \Theta} \EE_{(\bar x',\bar y')}[\ell(f(\bar x';\theta),\bar y')]-L(\theta;\mathcal{\bar D})\right]  - \Qcal_{n}(\Theta;s,q) 
 \\ & \le\EE_{\bar \Dcal, \bar \Dcal'}\left[\sup_{\theta \in \Theta} \frac{1}{n}\sum_{i=1}^n (\ell(f(\bar x_{i}';\theta), \bar y_{i}')-\ell(f(\bar x_i;\theta),\bar  y_{i}))\right]  - \Qcal_{n}(\Theta;s,q)  \\ & \le\EE_{\xi, \bar \Dcal, \bar \Dcal'}\left[\sup_{\theta \in \Theta} \frac{1}{n}\sum_{i=1}^n  \xi_i(\ell(f(\bar x_{i}';\theta), \bar y_{i}')-\ell(f(\bar x_i;\theta),\bar y_{i}))\right]  - \Qcal_{n}(\Theta;s,q) 
  \\ & \le 2\Rfra_{n}(\Theta)   - \Qcal_{n}(\Theta;s,q). 
\end{align*}
where the third line and the last line follow the subadditivity of supremum, the forth line follows the Jensen's inequality and the convexity of  the 
supremum, the fifth line follows that for each $\xi_i \in \{-1,+1\}$, the distribution of each term $\xi_i (\ell(f(\bar x_{i}';\theta),\bar y_{i}')-\ell(f(\bar x_i;\theta),\bar y_{i}))$ is the  distribution of  $(\ell(f(\bar x_{i}';\theta),\bar y_{i}')-\ell(f(\bar x_i;\theta),\bar y_{i}))$  since $\bar \Dcal$ and $\bar \Dcal'$ are drawn iid with the same distribution. Therefore, for any $\delta>0$, with probability at least $1-\delta$,

$$
\Phi(\Dcal) \le2\Rfra_{n}(\Theta)   - \Qcal_{n}(\Theta;s,q)+ \frac{Ms}{q} \sqrt{\frac{\ln(1/\delta)}{2n}}.
$$ 

\qed

\section{Additional discussion} \label{sec:app:discuss}
 
The subset $\Theta$ in Theorem \ref{thm:ge_1} characterizes the hypothesis space that is $\{x \mapsto f(x;\theta): \theta \in \Theta\}$. An important subtlety here is that  given a parameterized model $f$, one can apply Theorem \ref{thm:ge_1} to a subset $\Theta$ that depends on an algorithm and a distribution (but not directly on a dataset) such as $\Theta = \{\theta \in \RR^{d_y}: (\exists \Dcal \in A)[\text{$\theta$ is the possible output of  ordered SGD given $(f,\Dcal) $}]\}$ where $A$ is a fixed set of the training datasets such that  $\mathcal{D} \in A$ with high probability. Thus, even for the exact same model $f$ and problem setting, Theorem \ref{thm:ge_1}  might provide non-vacuous bounds for some choices of $\Theta$ but not for other choices  of $\Theta$. 

Moreover, we can easily obtain data-dependent bounds from Theorem \ref{thm:ge_1} by repeatedly applying Theorem \ref{thm:ge_1} to several subsets $\Theta$ and taking an union bound. For example, given a sequence $(\Theta_k)_{k\in \NN^{+}}$,   by applying Theorem \ref{thm:ge_1} to each $\Theta_k$ with $\delta=\delta' \frac{6}{\pi^2 k^2}$ (for each $k$) and by taking a union bound over all $k \in \NN^{+}$,  the following statement holds:  for any $\delta'>0$, with probability at least $1-\delta'$ over an iid draw of $n$  examples $\mathcal{D}=((x_i,y_i))_{i=1}^n$, we have that for all  $k \in \NN^{+}$ and $\theta \in \Theta_{k}$, 
\begin{align*}
\EE_{(x,y)}[\ell(f(x;\theta),y)] 
\le L_{q}(\theta;\Dcal) + 2\Rfra_{n}(\Theta_{k}) + \frac{Ms}{q} \sqrt{\frac{\ln(\pi^2 k^2/6\delta')}{2n}} - \Qcal_{n}(\Theta_{k};s,q).
\end{align*}
For example, let us choose $\Theta_k = \{\theta \in \RR^{d_y}: \|\theta\| \le c_{k}\}$ with some constants $c_{1}<c_{2}<\cdots$. Then, when  we obtain a  $\hat \theta_q$ after training based on a particular training dataset  $\Dcal$ such that $c_{\bar k-1}<\|\hat \theta_q\| \le c_{\bar k}$ for some $\bar k$, we can conclude the following:  with probability at least $1-\delta'$, $\EE_{(x,y)}[\ell(f(x;\theta),y)] \le L_{q}(\hat \theta_q;\Dcal) + 2\Rfra_{n}(\Theta_{\bar k}) + \frac{Ms}{q} \sqrt{\frac{\ln(\pi k^2/6\delta')}{2n}} - \Qcal_{n}(\Theta_{\bar k};s,q)$. This  is data-dependent in the sense that $\Theta_{\bar k}$ is selected in the data-dependent manner from $(\Theta_k)_{k\in \NN^{+}}$. This is in contrast to the fact that  as logically indicated in the theorem statement, one cannot directly apply Theorem \ref{thm:ge_1} to a single subset $\Theta$  that directly depends on training dataset; e.g., one \textit{cannot} apply Theorem \ref{thm:ge_1} to a singleton set  $\hat \Theta(\mathcal{D})=\{\hat \theta(\Dcal)\}$ where $\hat \theta(\Dcal)$ is the output of training given $\Dcal$.

\section{Additional experimental results and details}  \label{sec:app:exp}
\subsection{Additional results}

{\bf Wall-clock time. }Table \ref{tbl:wall_clock_time} summarises the wall-clock time values (in seconds) of mini-batch SGD and   ordered SGD. The wall-clock time  was computed with identical, independent, and freed GPUs for fair comparison. The wall-clock time
measures the time of the whole computations, including the extra computation of finding a set $Q$ of top-$q$ samples in $S$ in term of loss values. As it can be seen, the extra computation of finding a set $Q$ of top-$q$ samples is generally negligible. Furthermore, for larger scale problems, ordered SGD tends to be faster per epoch because of the computational saving of not using the full mini-batch for the backpropagation computation.    

\vspace{5pt}
{\bf Effect of different learning rates and mini-batch sizes.} Figures \ref{fig:new1} and  \ref{fig:new2} show the results with different learning rates and mini-batch sizes. Both use the same setting as that for CIFAR-10 with no data augmentation in  others results shown in Table \ref{tbl:test_error} and Figure \ref{fig:plots_sgd}. Figures \ref{fig:new1} and  \ref{fig:new2} consistently show improvement of ordered SGD over mini-batch SGD for all learning rates and mini-batch sizes. 

\begin{figure*}[h!]
\center
\includegraphics[width=\textwidth]{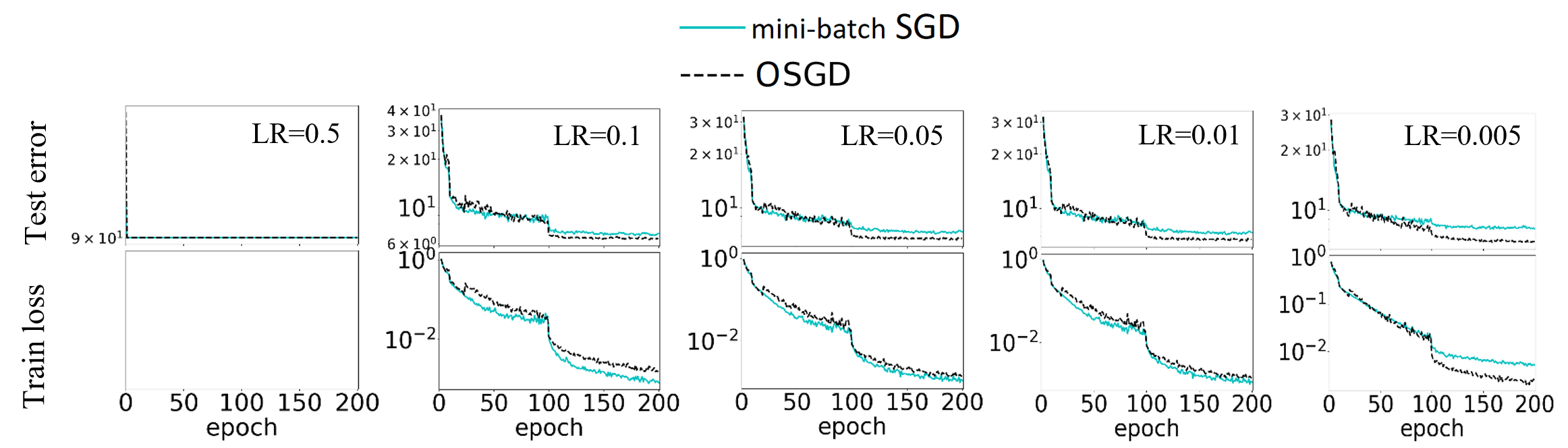}
\caption{Test error and training loss (in log scales) versus the number of epoch with  CIFAR-10 and no data augmentation by using different learning rates (LRs). The plotted values indicate the mean values over 10 random trials. The training loss values of LR=0.5 were `nan' for both methods.}
\label{fig:new1}
\end{figure*}

\begin{figure*}[h!]
\center
\includegraphics[width=0.8\textwidth]{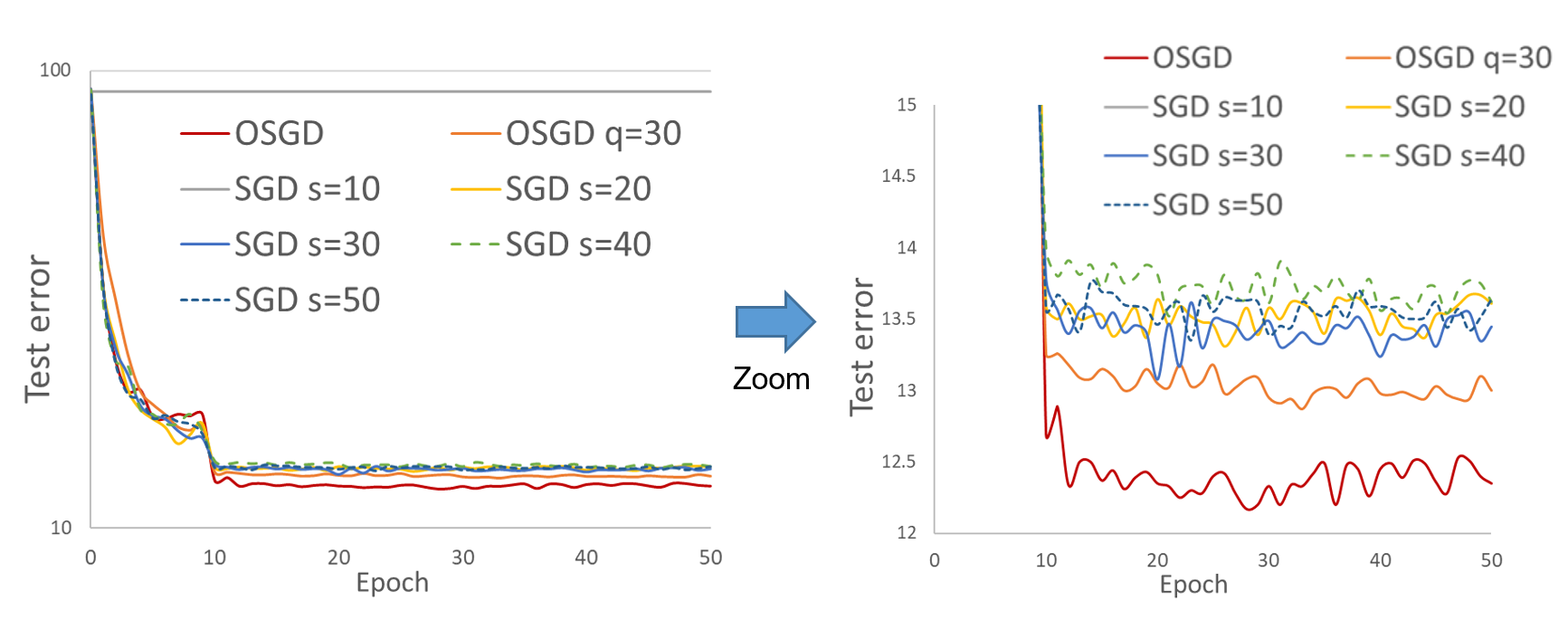}
\caption{Test error versus the number of epoch with  CIFAR-10 and no data augmentation by using different mini-batch sizes $s$.  }
\label{fig:new2}
\end{figure*}

\vspace{5pt}
{\bf Behaviors with different datasets. } Figure \ref{fig:plots_sgd_all} shows the behaviors of mini-batch SGD vs ordered SGD. As it can be seen, ordered SGD generally improved mini-batch SGD in terms of test errors. With data argumentation, we also tried linear logistic regression  for the Semeion dataset, and obtained the mean test errors of 19.11 for mini-batch SGD and 16.54 for  ordered SGD (the standard deviations were 1.48 and 1.24); i.e.,  ordered SGD improved over mini-batch SGD, but the mean test errors without data-augmentation were better  for both mini-batch SGD and ordered SGD. This is because the data augmentation made it  difficult to fit the augmented training dataset with linear models.

\vspace{5pt}
{\bf Effect of different values of $q$. }  Figure \ref{fig:plots_sgd_q} shows the behaviors of mini-batch SGD vs ordered SGD with different $q$ values. In the figure, label `ordered SGD' corresponds to  ordered SGD with the fixed adaptive rule, and other labels (e.g., `ordered SGD: $q=10$') corresponds to   ordered SGD with the fixed value of $q$ over the whole training procedure (e.g., with $q=10$). All experiments in the figure were conducted with data augmentations. PreActResNet18 was used for CIFAR-10, while LeNet was used for other datasets. As it can be seen in Figure \ref{fig:plots_sgd_q}, ordered SGD generally improved the test errors of mini-batch SGD, even with fixed $q$ values. When the value of $q$ is fixed to be small as in $q=10$,  the small $q$ value can be effective during the latter stage of training (e.g., Figure \ref{fig:plots_sgd_q} b) while the training can be inefficient during the initial stage of training (e.g., Figure \ref{fig:plots_sgd_q} c).

\vspace{5pt}
{\bf Results with ordered Adam.} Table \ref{tbl:test_error_adam} compares the testing performance of ordered Adam and (standard) Adam for different models and datasets. The table reports the mean and the standard deviation of test errors (i.e., 100 $\times$ the average of 0-1 losses on test dataset)  over $10$ random experiments with different random seeds. The procedures of ordered Adam follow those of   Adam except the additional sample strategy (line 3 - 4 of Algorithm \ref{al:qSGD}). Table \ref{tbl:test_error_adam}   shows that ordered Adam  improved Adam for all settings, except CIFAR-10 with data augmentation.
For CIFAR-10 with data augmentation, ordered SGD preformed the best among mini-batch SGD, Adam,  ordered SGD, and  ordered Adam, as it can be seen in Tables \ref{tbl:test_error} and  \ref{tbl:test_error_adam}.  

\begin{table*}[t!]
\centering \renewcommand{\arraystretch}{1.0} \fontsize{10pt}{10pt}\selectfont
\caption{Average wall-clock time (seconds) per epoch.} \label{tbl:wall_clock_time}
\begin{tabular}{lccccc}
\toprule
Data Aug & Datasets & Model & mini-batch SGD & ordered SGD & difference \\
\midrule
No & Semeion  & Logistic model & 0.15 (0.01) & 0.15 (0.01) & 0.00 \\
\midrule
No & MNIST    & Logistic model & 7.16 (0.27)  & 7.32 (0.24) & -0.16 \\
\midrule
No & Semeion  & SVM & 0.17  (0.01) & 0.17 (0.01) & 0.00 \\
\midrule
No & MNIST    & SVM & 8.60 (0.31)  & 8.72 (0.29) & -0.12 \\
\midrule
No & Semeion  & LeNet & 0.18 (0.01) & 0.18 (0.01) & 0.00\\
\midrule
No & MNIST    & LeNet & 9.00 (0.34)  & 9.12  (0.27) & -0.12 \\
\midrule
No & KMNIST    & LeNet & 9.23 (0.33) & 9.04 (0.55) & 0.19 \\
\midrule
No & Fashion-MNIST & LeNet & 8.56 (0.48) & 9.45 (0.31) & -0.90 \\
\midrule
No & CIFAR-10  & PreActResNet18 & 45.55 (0.47) & 43.72 (0.93) & 1.82 \\
\midrule
No & CIFAR-100  & PreActResNet18 & 46.83 (0.90) & 43.95 (1.03) & 2.89\\
\midrule
No & SVHN      & PreActResNet18 & 71.95 (1.40) & 66.94  (1.67) & 5.01 \\
\midrule
Yes & Semeion  & LeNet & 0.28 (0.02) & 0.28 (0.02)  & 0.00\\
\midrule
Yes & MNIST    & LeNet & 14.44 (0.54)  & 14.77 (0.41) & -0.32 \\
\midrule
Yes & KMNIST    & LeNet & 12.17 (0.33) & 11.42  (0.29) & 0.75 \\
\midrule
Yes & Fashion-MNIST    & LeNet & 12.23 (0.40) & 12.38 (0.37) & -0.14 \\
\midrule
Yes & CIFAR-10  & PreActResNet18 & 48.18 (0.58) & 46.40 (0.97) & 1.78 \\
\midrule
Yes & CIFAR-100  & PreActResNet18 & 47.37 (0.84) & 44.74 (0.91) & 2.63 \\
\midrule
Yes & SVHN      & PreActResNet18 & 72.29 (1.23) & 67.95 (1.54) & 4.34 \\
\bottomrule
\end{tabular} 
\end{table*} 

\clearpage

\begin{figure*}[p]
\vspace{-2pt}
\begin{subfigure}[b]{0.0112\textwidth}
  \includegraphics[scale=0.14]{fig/labels/y_label}
\end{subfigure}
\begin{subfigure}[b]{0.2424\textwidth}
  \includegraphics[width=\textwidth,height=0.95\columnwidth]{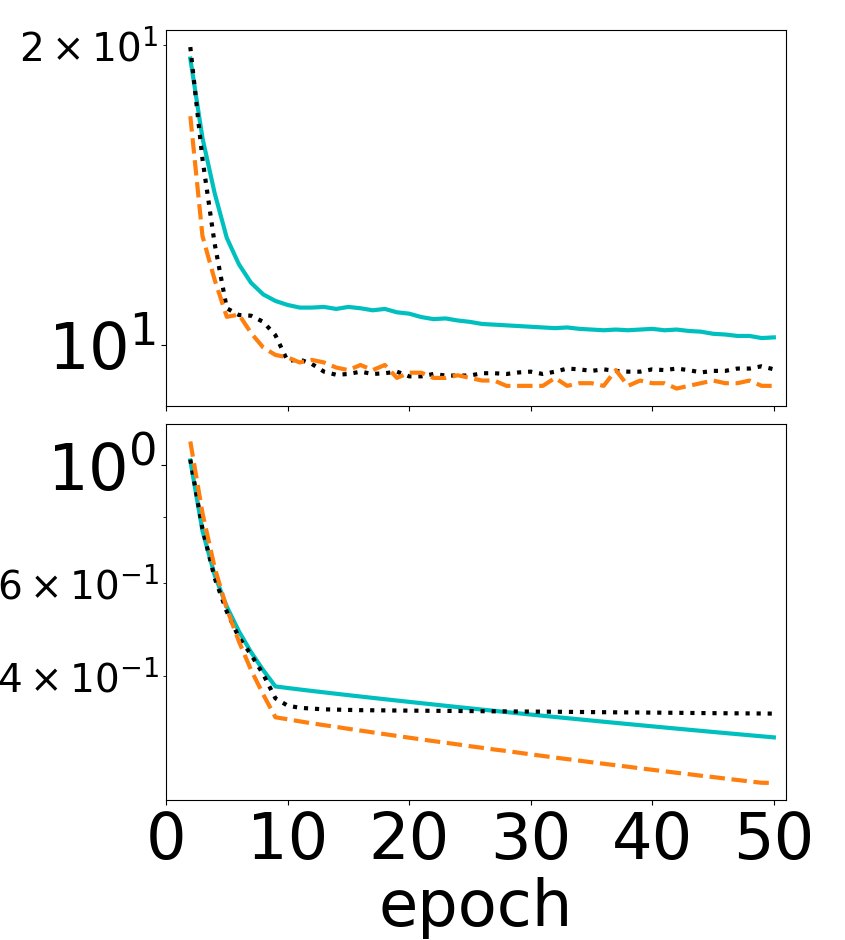}
  \caption{Semeion \&\ Logistic } 
\end{subfigure}
\begin{subfigure}[b]{0.2424\textwidth}
  \includegraphics[width=\textwidth,height=0.95\columnwidth]{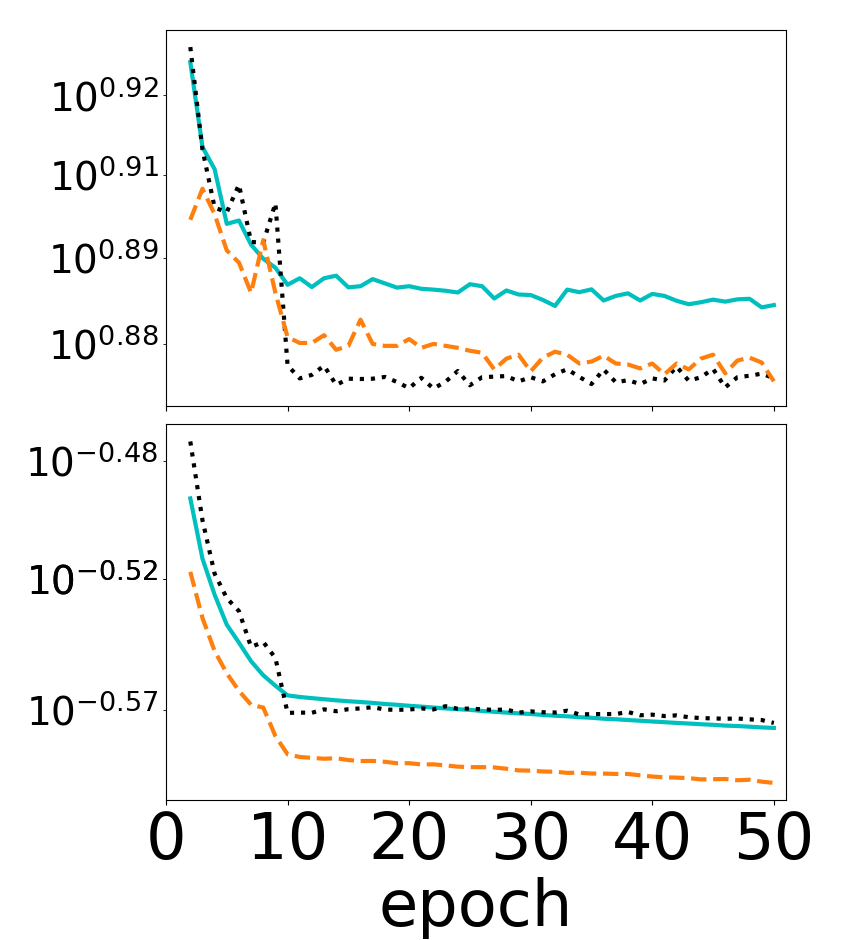}
  \caption{MNIST  \& Logistic} 
\end{subfigure}
\begin{subfigure}[b]{0.2424\textwidth}
  \includegraphics[width=\textwidth,height=0.95\columnwidth]{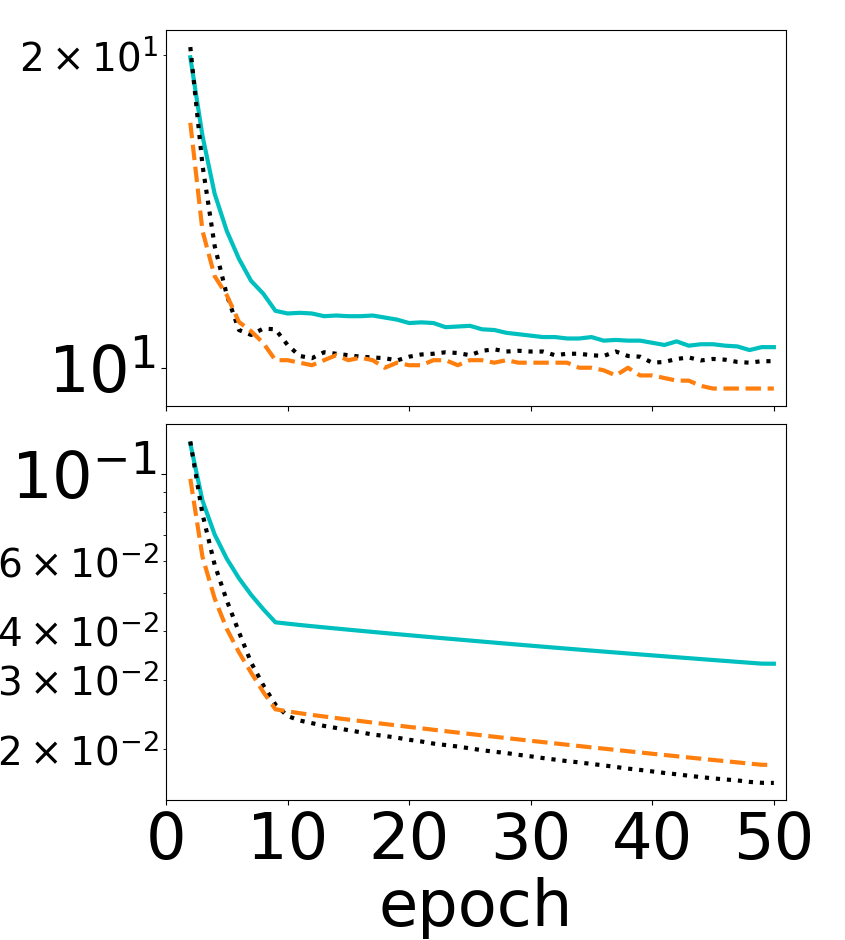}
  \caption{Semeion  \& SVM }  
\end{subfigure}
\begin{subfigure}[b]{0.2424\textwidth}
  \includegraphics[width=\textwidth,height=0.95\columnwidth]{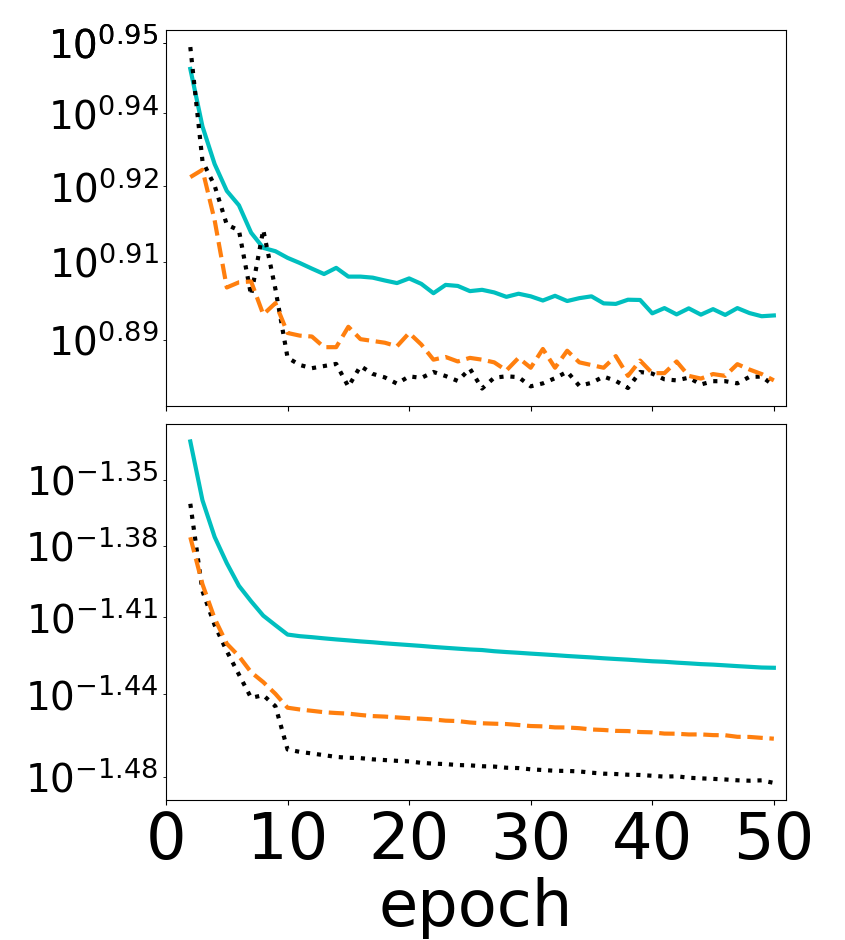}\llap{\shortstack{\includegraphics[scale=0.16]{fig/labels/legend}\\
        \rule{0ex}{1.12in}}\rule{0.14in}{0ex}}
  \caption{MNIST  \& SVM } 
\end{subfigure}
\begin{subfigure}[b]{0.0112\textwidth}
  \includegraphics[scale=0.14]{fig/labels/y_label}
\end{subfigure}
\begin{subfigure}[b]{0.2424\textwidth}
  \includegraphics[width=\textwidth,height=0.95\columnwidth]{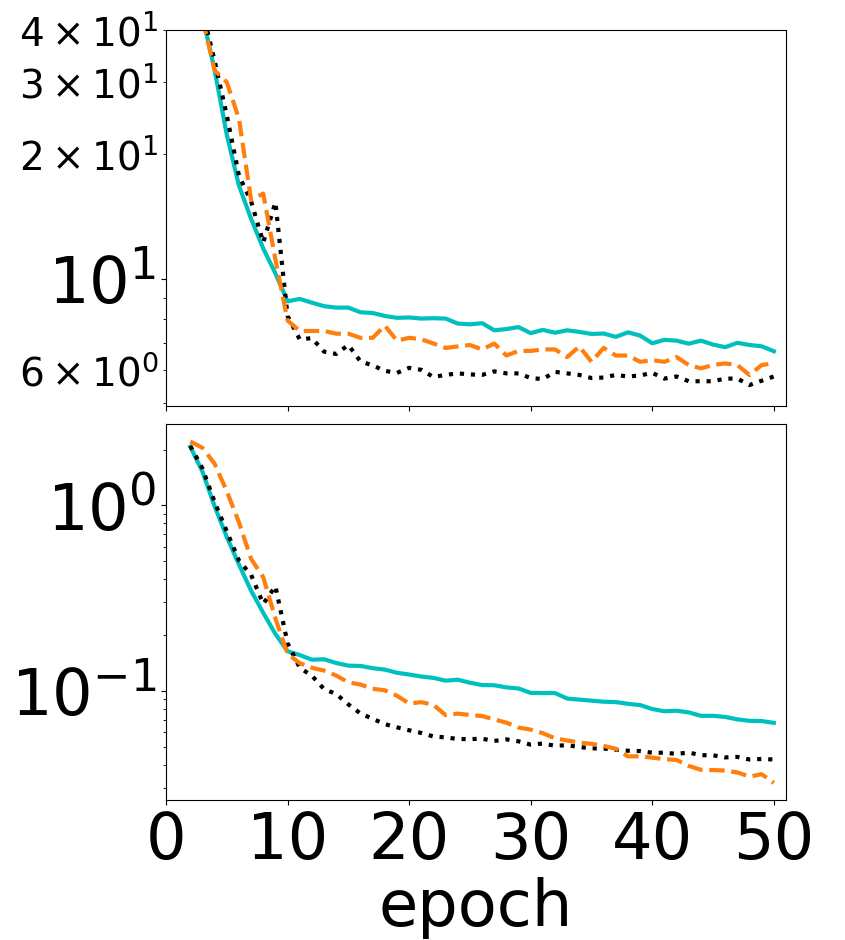}
  \caption{Semeion  \& LeNet } 
\end{subfigure}
\begin{subfigure}[b]{0.2424\textwidth}
  \includegraphics[width=\textwidth,height=0.95\columnwidth]{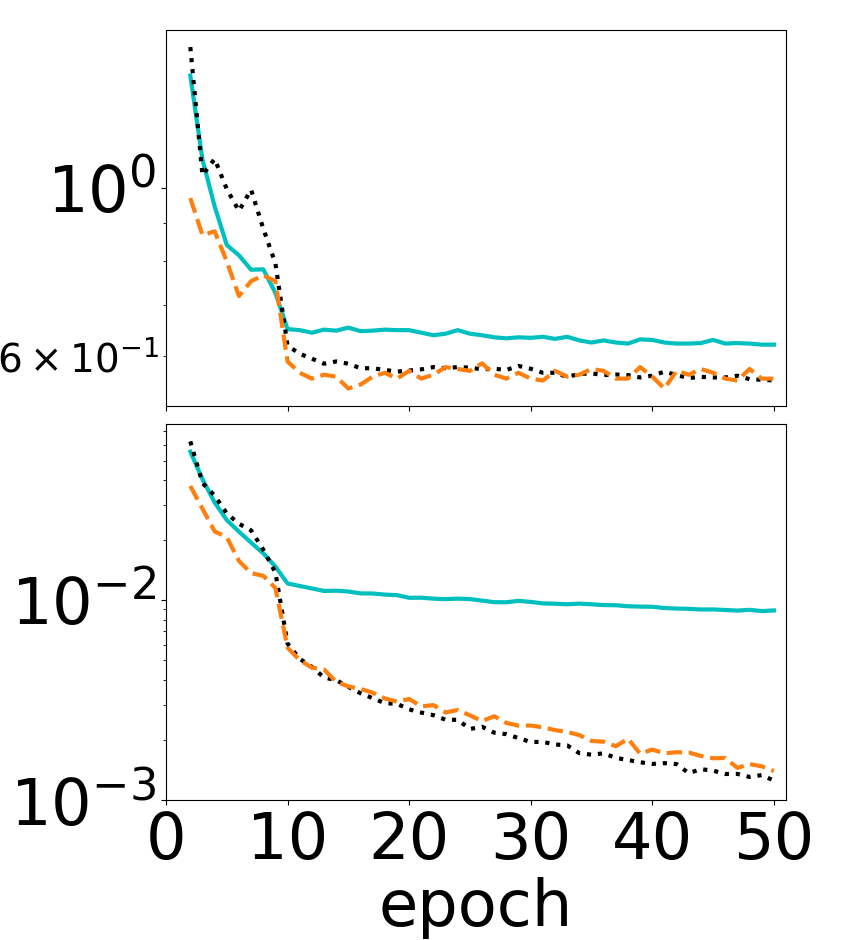}
  \caption{MNIST  \& LeNet} 
\end{subfigure}
\begin{subfigure}[b]{0.2424\textwidth}
  \includegraphics[width=\textwidth,height=0.95\columnwidth]{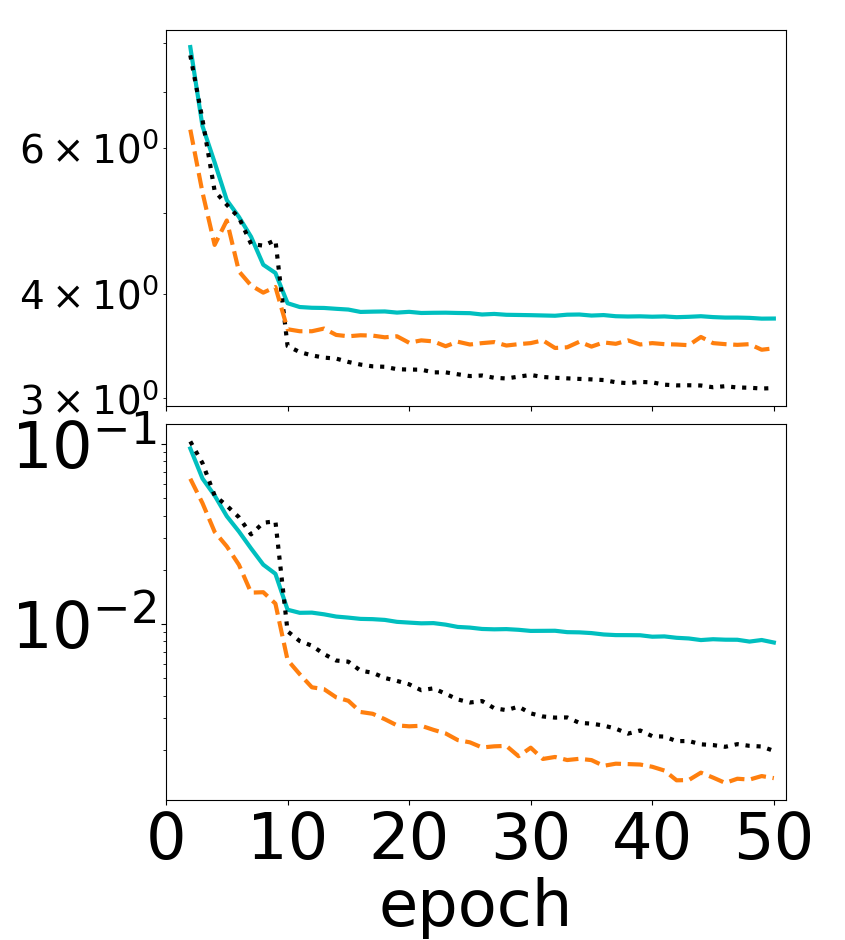}
  \caption{KMNIST} 
\end{subfigure}
\begin{subfigure}[b]{0.2424\textwidth}
  \includegraphics[width=\textwidth,height=0.95\columnwidth]{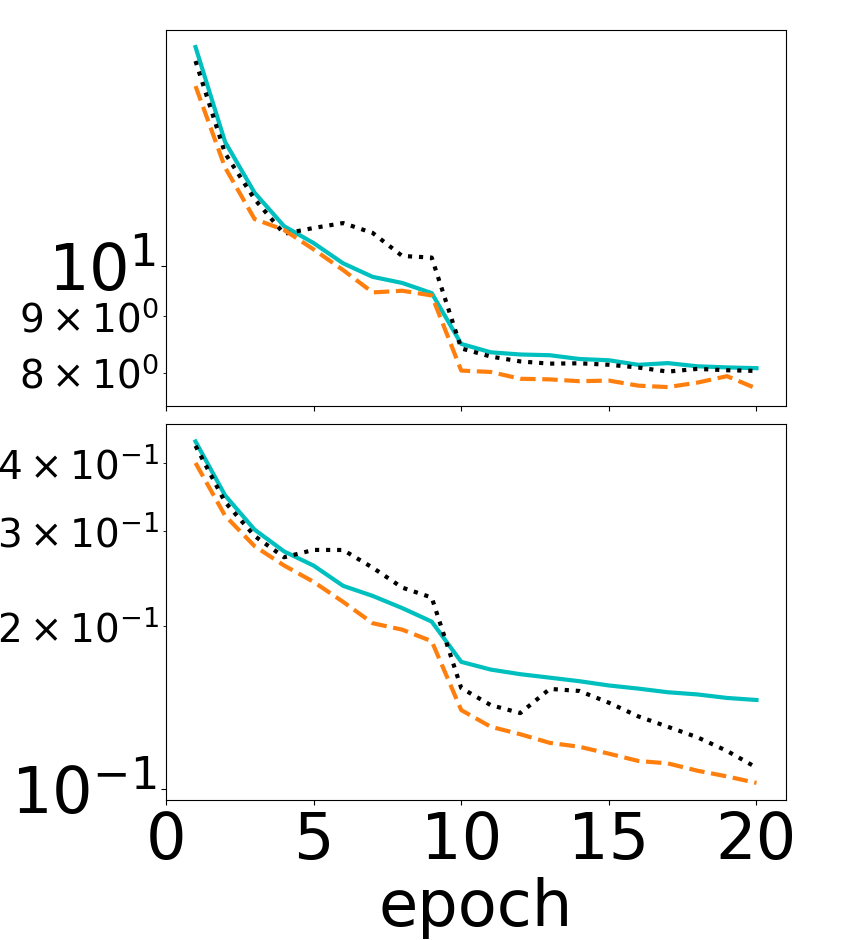}
  \caption{Fashion-MNIST} 
\end{subfigure}
\begin{subfigure}[b]{0.011\textwidth}
  \includegraphics[scale=0.14]{fig/labels/y_label}
\end{subfigure}
\begin{subfigure}[b]{0.2424\textwidth}
  \includegraphics[width=\textwidth,height=0.95\columnwidth]{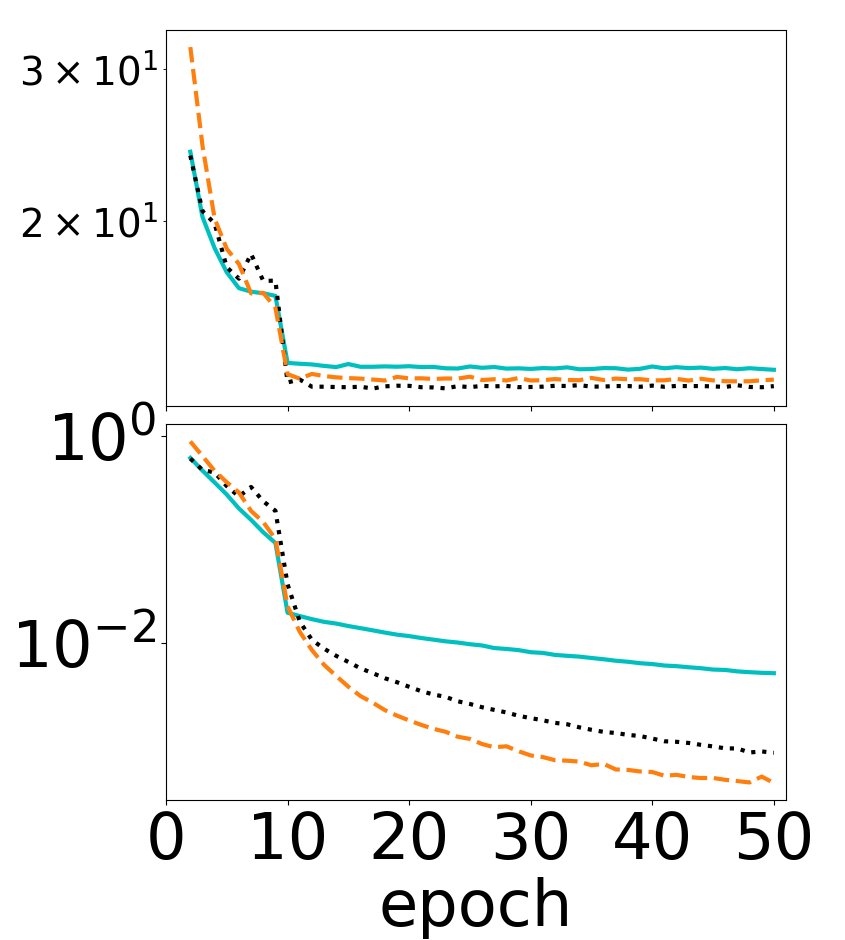}
  \caption{CIFAR-10} 
\end{subfigure}
\begin{subfigure}[b]{0.2424\textwidth}
  \includegraphics[width=\textwidth,height=0.95\columnwidth]{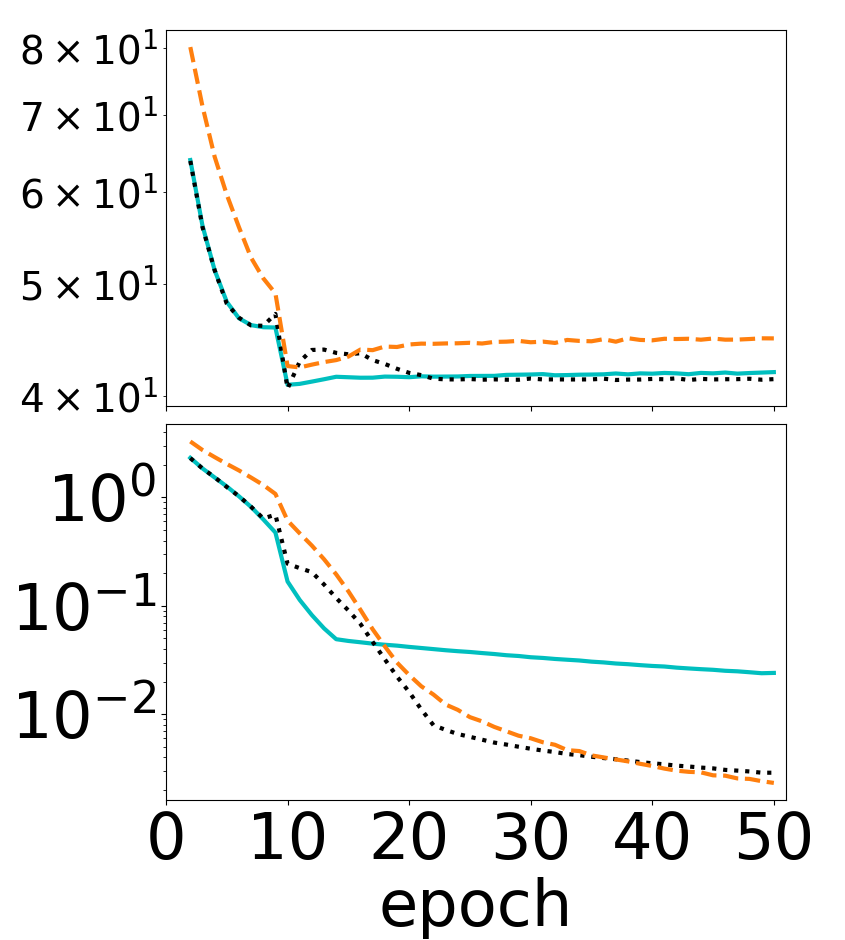}
  \caption{CIFAR-100} 
\end{subfigure}
\begin{subfigure}[b]{0.2424\textwidth}
  \includegraphics[width=\textwidth,height=0.95\columnwidth]{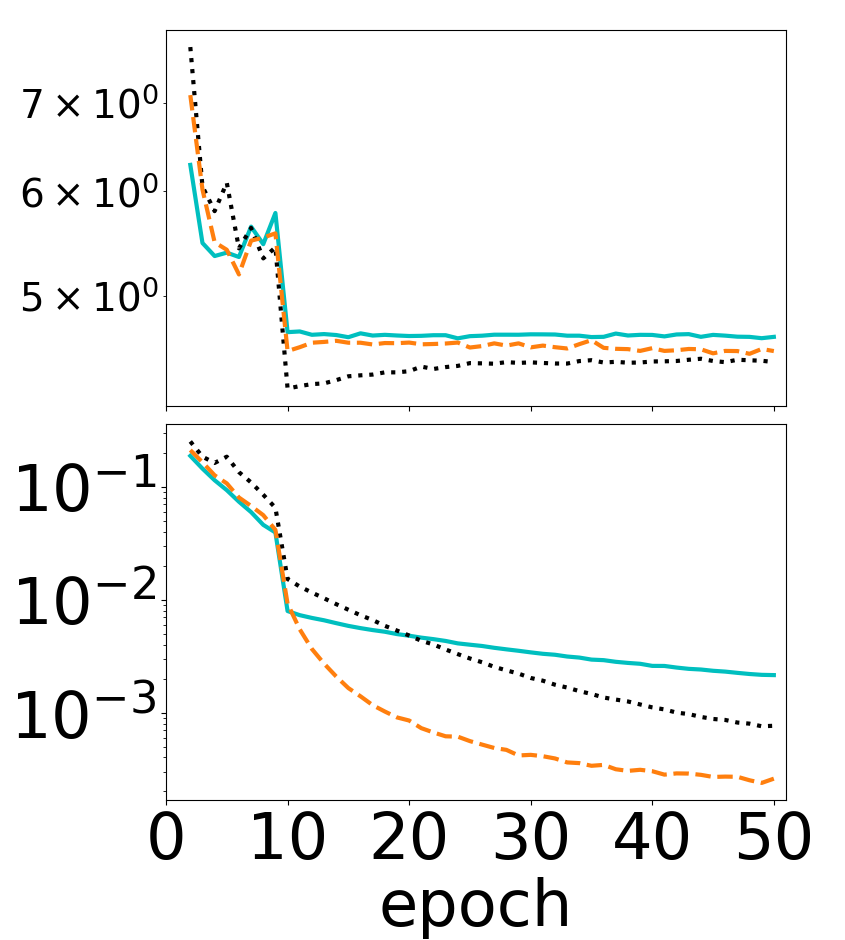}
  \caption{SVHN} 
\end{subfigure}
\begin{subfigure}[b]{0.2424\textwidth}
  \includegraphics[width=\textwidth,height=0.95\columnwidth]{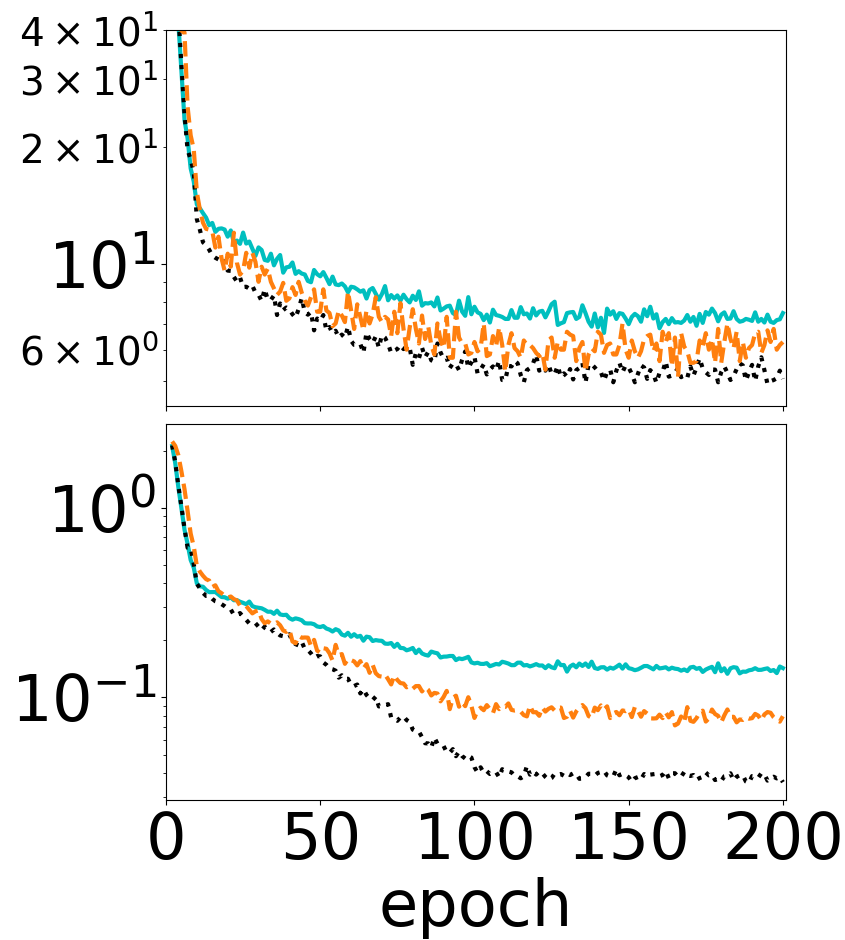}
  \caption{Semeion   \& LeNet} 
\end{subfigure}
\begin{subfigure}[b]{0.0112\textwidth}
  \includegraphics[scale=0.14]{fig/labels/y_label}
\end{subfigure}
\begin{subfigure}[b]{0.2424\textwidth}
  \includegraphics[width=\textwidth,height=0.95\columnwidth]{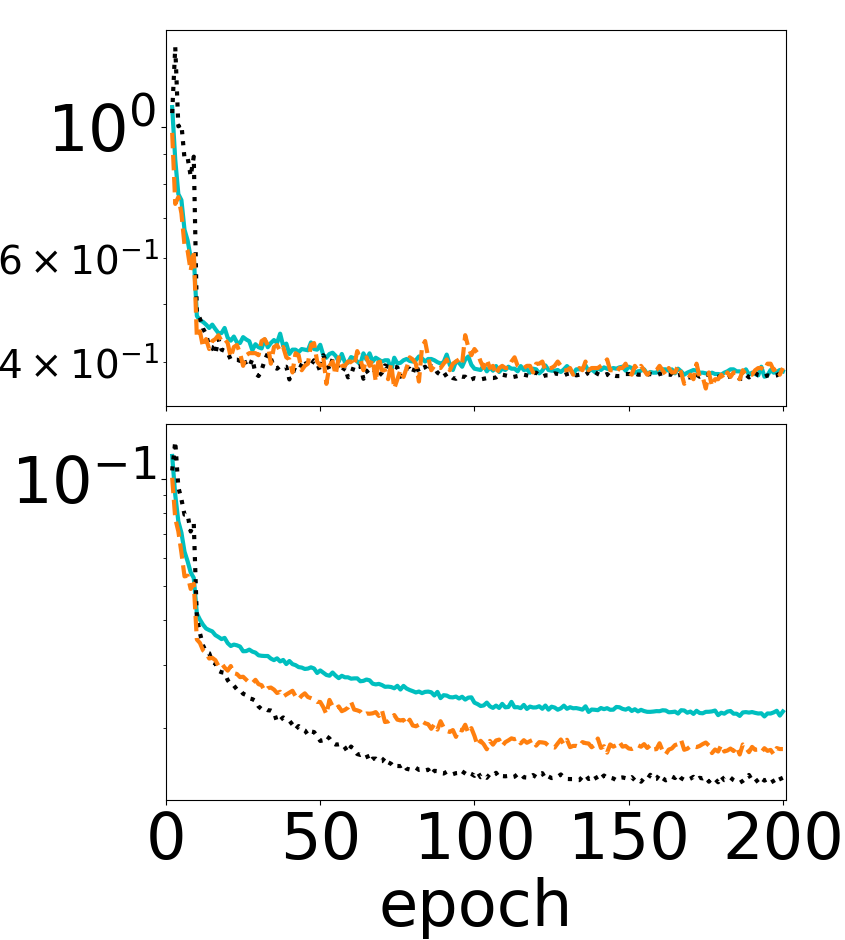}
  \caption{MNIST   \& LeNet} 
\end{subfigure}
\begin{subfigure}[b]{0.2424\textwidth}
  \includegraphics[width=\textwidth,height=0.95\columnwidth]{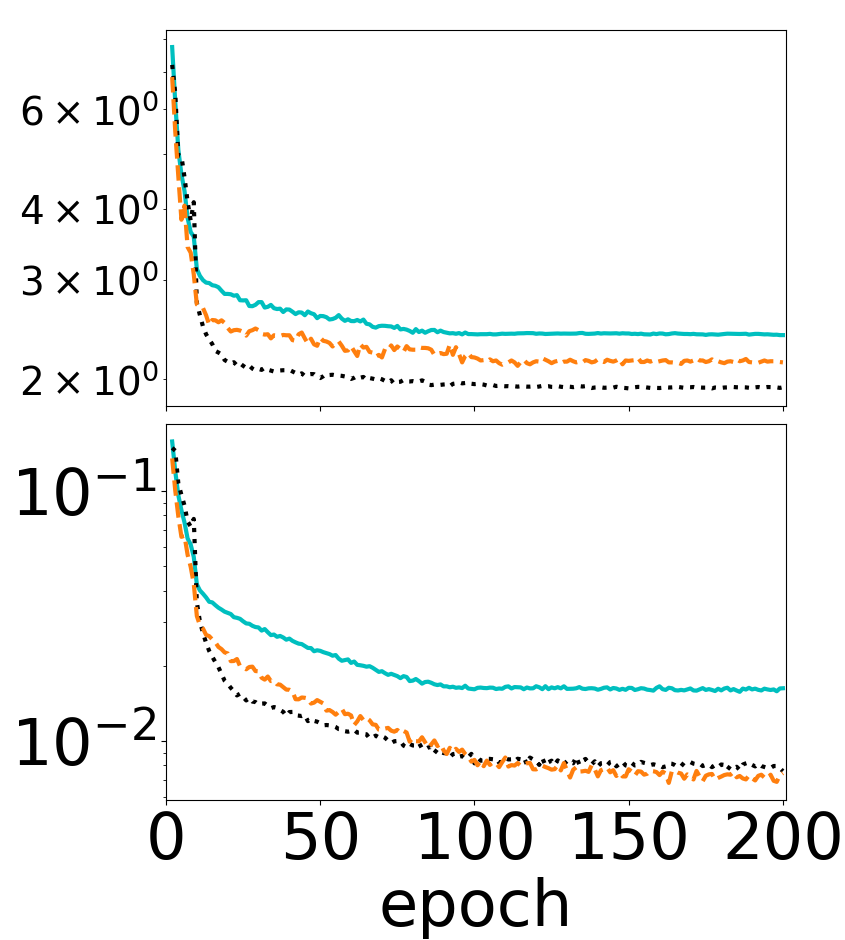}
  \caption{KMNIST } 
\end{subfigure}
\begin{subfigure}[b]{0.2424\textwidth}
  \includegraphics[width=\textwidth,height=0.95\columnwidth]{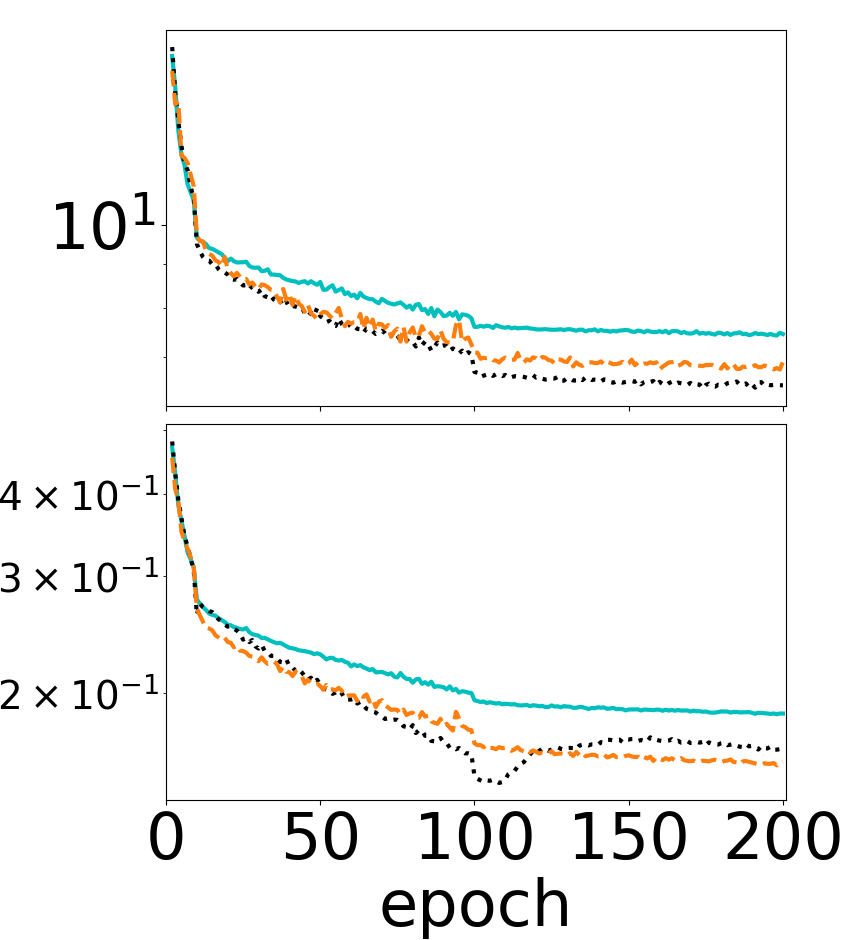}
  \caption{Fashion-MNIST} 
\end{subfigure}
\begin{subfigure}[b]{0.2424\textwidth}
  \includegraphics[width=\textwidth,height=0.95\columnwidth]{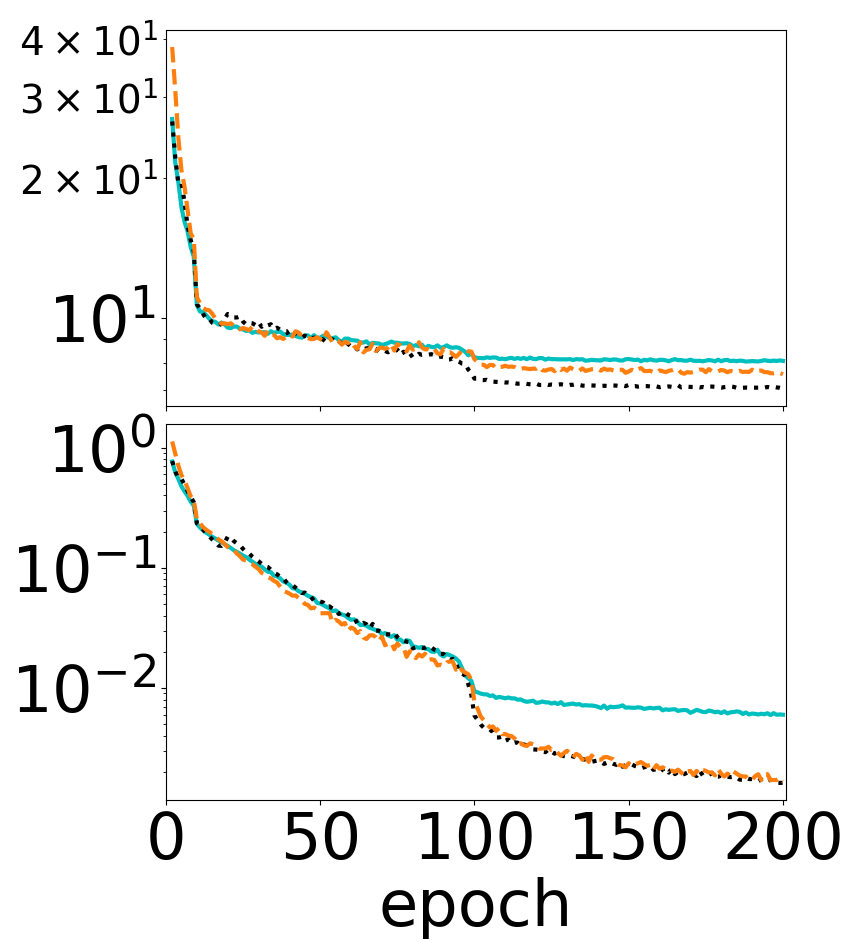}
  \caption{CIFAR-10} 
\end{subfigure}
\begin{subfigure}[b]{0.0112\textwidth}
  \includegraphics[scale=0.14]{fig/labels/y_label}
\end{subfigure}
\begin{subfigure}[b]{0.2424\textwidth}
  \includegraphics[width=\textwidth,height=0.95\columnwidth]{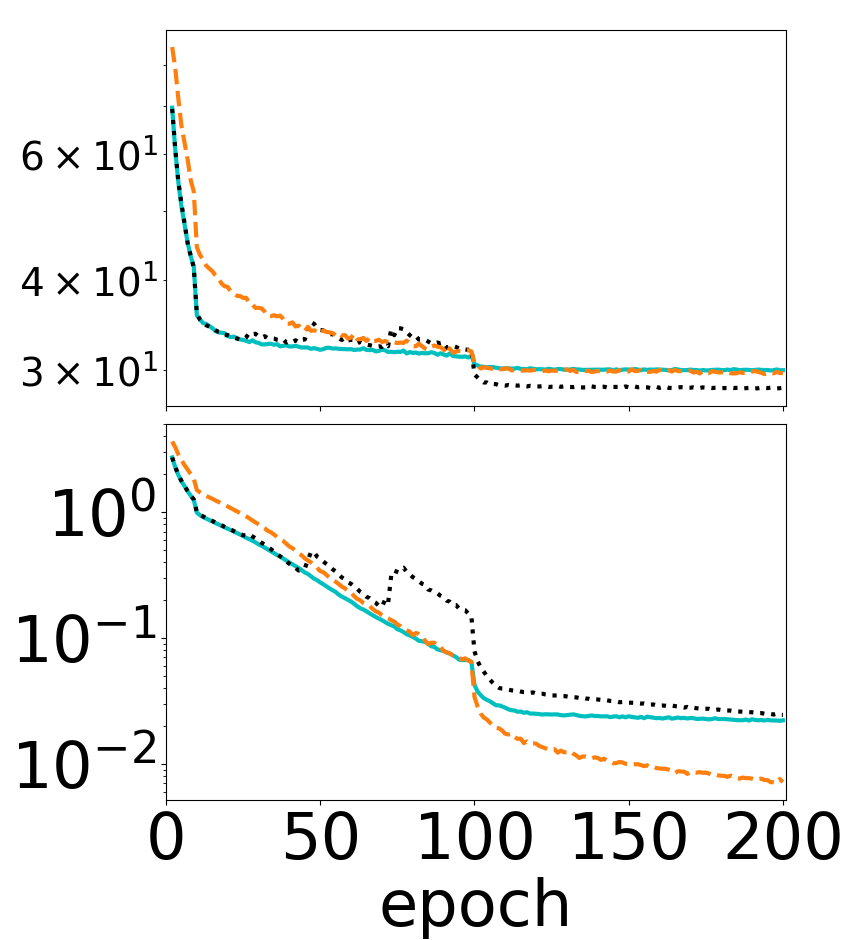}
  \caption{CIFAR-100} 
\end{subfigure}
\begin{subfigure}[b]{0.2424\textwidth}
  \includegraphics[width=\textwidth,height=0.95\columnwidth]{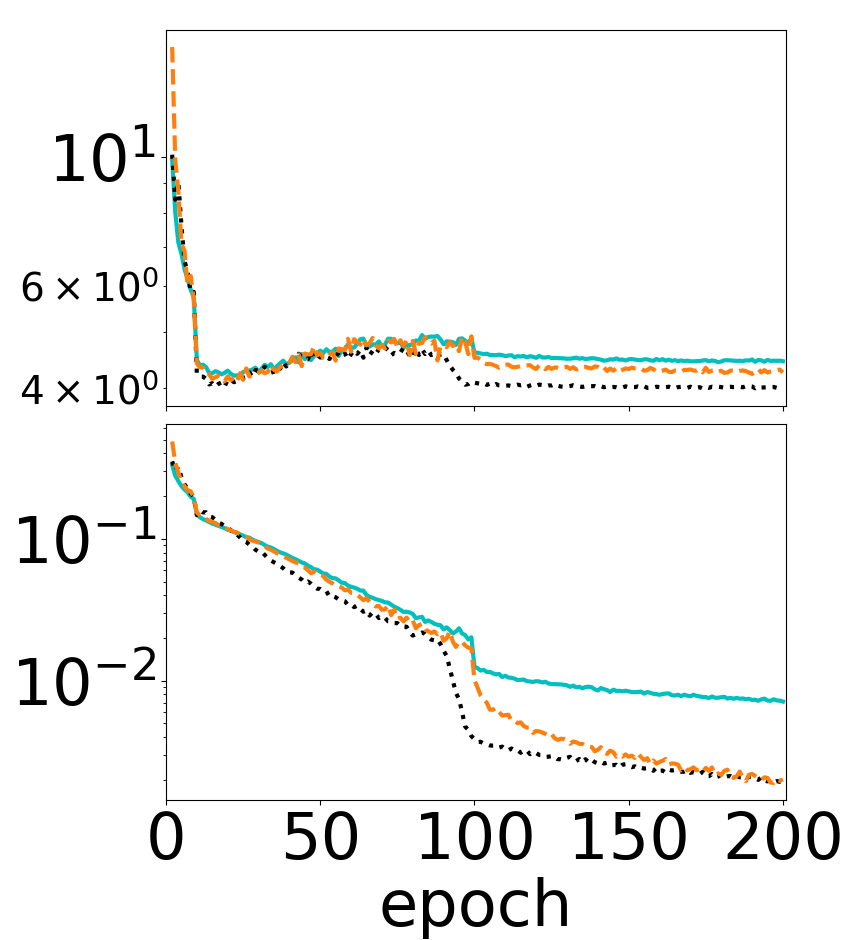}
  \caption{SVHN} 
\end{subfigure}
\caption{Test error and training loss (in log scales) versus epoch for all experiments with mini-batch SGD and ordered SGD. These are without data augmentation in subfigures (a)-(k), and with data augmentation in subfigures (l)-(r). The plotted values are the mean values over ten random trials.}
\label{fig:plots_sgd_all}
\end{figure*}

\begin{figure*}[ht!]
\begin{subfigure}[b]{0.5\textwidth}
  \includegraphics[width=\textwidth]{fig/fix_q/cifar10_1_PreActResNet18_ep_plt}
  \caption{CIFAR-10} 
\end{subfigure}
\begin{subfigure}[b]{0.5\textwidth}
  \includegraphics[width=\textwidth]{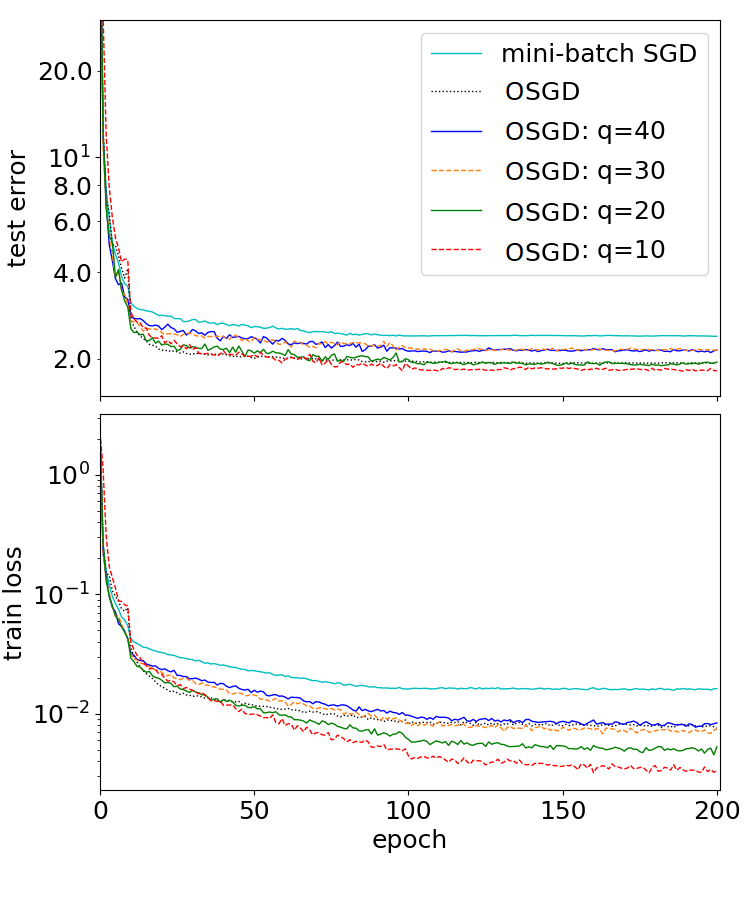}
  \caption{KMNIST} 
\end{subfigure}
\\
\begin{subfigure}[b]{0.5\textwidth}
  \includegraphics[width=\textwidth]{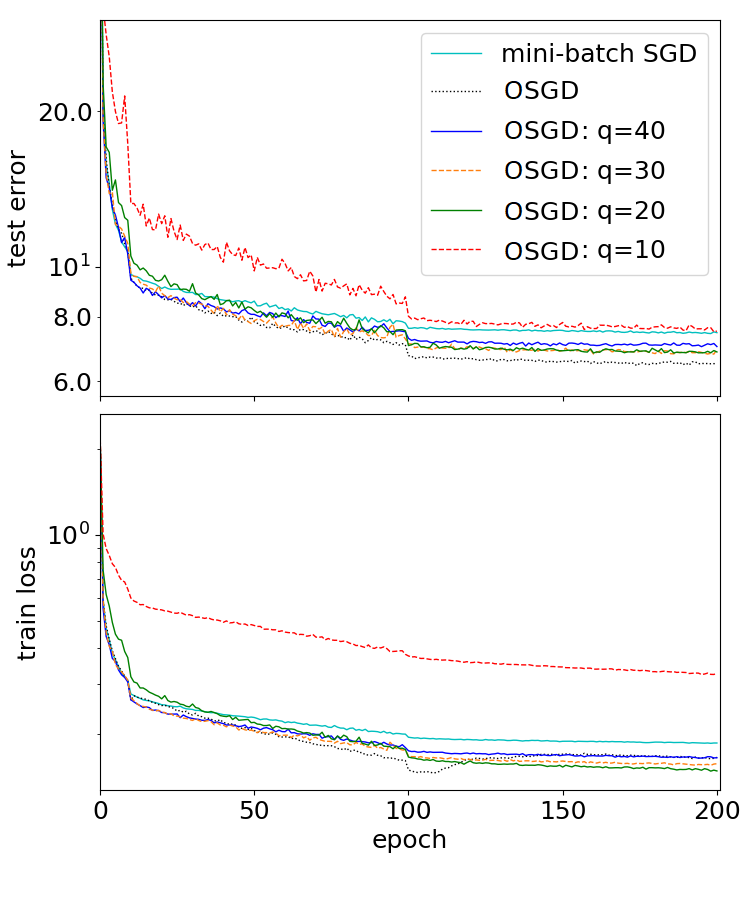}
  \caption{Fashion-MNIST} 
\end{subfigure}
\begin{subfigure}[b]{0.5\textwidth}
  \includegraphics[width=\textwidth]{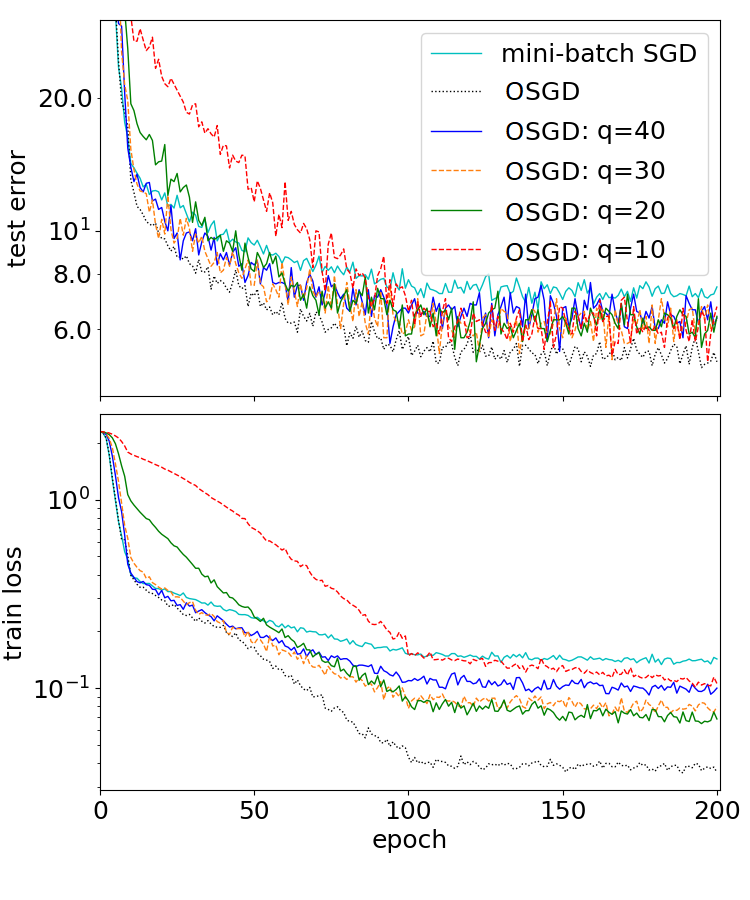}
  \caption{Semeion} 
\end{subfigure}
\caption{Effect of different values of $q$.}
\label{fig:plots_sgd_q}
\end{figure*}

\begin{table*}[p]
\centering \renewcommand{\arraystretch}{0.5} \fontsize{10pt}{10pt}\selectfont
\caption{Test errors (\%) of Adam and ordered Adam. The  last column labeled ``Improve'' shows  relative improvements (\%) from Adam to ordered Adam. In the other columns, the numbers indicate the mean test errors (and standard deviations in parentheses) over ten random trials. The first column shows `No' for  no  data augmentation, and `Yes' for data augmentation.} \label{tbl:test_error_adam} 
\begin{tabular}{lccccc}
\toprule
Data Aug & Datasets & Model &   Adam & ordered Adam & Improve\\
\midrule
No & Semeion  & Logistic model & 12.12 (0.71) & 10.37 (0.77) & 14.46 \\
\midrule
No & MNIST    & Logistic model & 7.34 (0.03)  & 7.20 (0.03)  & 1.97 \\
\midrule
No & Semeion  & SVM & 11.45  (0.90) & 10.91 (0.86) & 4.71 \\
\midrule
No & MNIST    & SVM & 7.53 (0.03)  & 7.43 (0.02)  & 1.38 \\
\midrule
No & Semeion  & LeNet & 6.21 (0.64) & 5.75 (0.42) & 7.34 \\
\midrule
No & MNIST    & LeNet & 0.70 (0.04)  & 0.63 (0.04)  & 10.07 \\
\midrule
No & KMNIST    & LeNet & 3.14 (0.13) & 3.13 (0.14) & 0.60 \\
\midrule
No & Fashion-MNIST    & LeNet & 7.79 (0.17) & 7.79 (0.21) & 0.01 \\
\midrule
No & CIFAR-10  & PreActResNet18 & 13.21 (0.42) & 12.98 (0.27) & 1.68 \\
\midrule
No & CIFAR-100  & PreActResNet18 & 45.33 (0.89) & 44.42 (0.72) & 2.01 \\
\midrule
No & SVHN      & PreActResNet18 & 4.72 (0.12) & 4.64 (0.09) & 1.52 \\
\midrule
Yes & Semeion  & LeNet & 5.80 (0.85) & 5.70 (0.60) & 1.74 \\
\midrule
Yes & MNIST    & LeNet & 0.45 (0.05)  & 0.44 (0.02)  & 3.10 \\
\midrule
Yes &KMNIST    & LeNet & 2.01 (0.08) & 1.94 (0.16) & 3.49 \\
\midrule
Yes &Fashion-MNIST    & LeNet & 6.61 (0.14) & 6.56 (0.14) & 0.82 \\
\midrule
Yes &CIFAR-10  & PreActResNet18 & 7.92 (0.28) & 8.03 (0.13) & -1.39 \\
\midrule
Yes &CIFAR-100  & PreActResNet18 & 32.24 (0.52) & 32.03 (0.52) & 0.65 \\
\midrule
Yes &SVHN      & PreActResNet18 & 4.42 (0.12) & 4.19 (0.11) & 5.29\\
\bottomrule
\end{tabular} 
\end{table*} 

\clearpage

\subsection{Additional details}
For all experiments, mini-batch SGD and ordered SGD (as well as Adam and ordered Adam) were run with the same machine and the same PyTorch codes except a single-line modification: 
\begin{itemize}
\item 
\texttt{loss = torch.mean(loss)} for mini-batch SGD and Adam
\item 
\texttt{loss = torch.mean(torch.topk(loss, min(q, s), sorted=False, dim=0)[0])} for ordered SGD and ordered Adam.  
\end{itemize} 

{\bf For 2-D illustrations in Figure \ref{fig:2d_illustration}. }We used  the (binary) cross entropy loss, $s=100$, and 2 dimensional synthetic datasets with $n=200$ in Figures \ref{fig:2d_illustration_a}--\ref{fig:2d_illustration_b} and $n=1000$ in Figures \ref{fig:2d_illustration_c}--\ref{fig:2d_illustration_d}. The artificial neural network (ANN) used in Figures \ref{fig:2d_illustration_c} and \ref{fig:2d_illustration_d}   is a fully-connected feedforward neural network with rectified
linear units (ReLUs) and three hidden layers, where each hidden layer  contained  20 neurons  in Figures \ref{fig:2d_illustration_c} and 10 neurons in Figures \ref{fig:2d_illustration_d}.

{\bf For other numerical results.  }For mixup and random erasing, we used the same setting as in the corresponding previous papers \citep{zhong2017random,verma2019manifold}. For others, we divided the learning rate by $10$ at the beginning of 10th  epoch for all experiments (with and without data augmentation), and of 100th epoch for those with data augmentation. With $y \in \{1,\dots, d_y\}$, we used the cross entropy loss  $\ell(a,y)=- \log \frac{\exp(a_y)}{\sum_{k'}\exp(a_{k'})}$ for neural networks as well as  multinomial logistic models, and  a multiclass hinge loss $\ell(a,y)= \sum_{k\neq y} \max(0,1+a_{k}-a_{y})$ for SVMs \citep{weston1999support}.  For the variant of LeNet, we used the following  architecture with five layers (three hidden layers):  
\vspace{-5pt} 
\begin{enumerate}[leftmargin=18pt]
\item \vspace{-3pt} 
Input layer
\item \vspace{-3pt} 
Convolutional layer  with 64 $5\times 5$ filters, followed
by max pooling of size of 2 by 2 and ReLU.
\item \vspace{-3pt} 
Convolutional layer  with 64 $5\times 5$ filters, followed
by max pooling of size of 2 by 2 and ReLU.
\item \vspace{-3pt} 
Fully connected layer with 1014 output units, followed by ReLU.
\item \vspace{-3pt} 
Fully connected layer with the number of output units being equal to the number of target classes.
\end{enumerate}

\end{document}